\renewcommand*{\@fnsymbol}[1]{\ifcase#1\or*\else\@arabic{\numexpr#1-1\relax}\fi}
\newtheorem{theorem}{Theorem}[section]
\newtheorem{lemma}[theorem]{Lemma}
\theoremstyle{definition}
\newtheorem{definition}[theorem]{Definition}
\theoremstyle{remark}
\newtheorem{remark}[theorem]{Remark}
\def\eqref#1{equation~\ref{#1}}
\def\1{\bm{1}}
\DeclareMathAlphabet{\mathsfit}{\encodingdefault}{\sfdefault}{m}{sl}
\SetMathAlphabet{\mathsfit}{bold}{\encodingdefault}{\sfdefault}{bx}{n}
\newcommand{\R}{\mathbb{R}}
\newcommand{\XX}{X}
\newcommand{\rhoA}{\rho_{A}}
\newcommand{\rhox}{\rho_{\XX}}
\newcommand{\Ltr}{{\rhox}}
\newcommand{\Lt}{{L^2(\rhox)}}
\newcommand{\Op}{{T}}
\newcommand{\Tau}{\mathcal{T}}
\newcommand{\Hh}{\mathbb{H}}
\newcommand{\PP}{\mathbb{P}}
\newcommand{\EE}{\mathbb{E}}
\definecolor{lightgray}{gray}{0.9} 
\title{Harnessing small projectors and multiple views for efficient vision pretraining}
\author{%
  Kumar Krishna Agrawal \thanks{Equal Contribution, $^{\dagger}$ Co-senior authorship,\; Correspondence: blake.richards@mcgill.ca}\; \thanks{UC Berkeley, CA, USA} \hspace{5mm}
  \And 
  Arna Ghosh \footnotemark[1]\; \thanks{Mila - Quebec AI Institute \& Computer Science, McGill University, Montr\'eal, QC, Canada}\;
  \And 
  Shagun Sodhani \thanks{Meta FAIR, Toronto, ON, Canada}
  \And
  Adam M. Oberman$^{\dagger}$  \thanks{Mila - Quebec AI Institute \& Mathematics and Statistics, McGill University, Montr\'eal, QC, Canada}
  \And 
  Blake A. Richards $^{\dagger}$ \footnotemark[3]\; \thanks{Neurology \& Neurosurgery and Montreal Neurological Institute, McGill University, Montr\'eal, QC, Canada}\; \thanks{CIFAR Learning in Machines \& Brains Program, Toronto, ON, Canada}
}
\begin{document}

\maketitle

\setcounter{footnote}{0}
\begin{abstract}
Recent progress in self-supervised (SSL) visual representation learning has led to the development of several different proposed frameworks that rely on augmentations of images but use different loss functions. 
However, there are few theoretically grounded principles to guide practice, so practical implementation of each SSL framework requires several heuristics to achieve competitive performance.
In this work, we build on recent analytical results to design practical recommendations for competitive and efficient SSL that are grounded in theory. 
Specifically, recent theory tells us that existing SSL frameworks are actually minimizing the same idealized loss, which is to learn features that best match the data similarity kernel defined by the augmentations used.
We show how this idealized loss can be reformulated to a functionally equivalent loss that is more efficient to compute.
We study the implicit bias of using gradient descent to minimize our reformulated loss function, and find that using a stronger orthogonalization constraint with a reduced projector dimensionality should yield good representations.
Furthermore, the theory tells us that approximating the reformulated loss should be improved by increasing the number of augmentations, and as such using multiple augmentations should lead to improved convergence.
We empirically verify our findings on CIFAR, STL and Imagenet datasets, wherein we demonstrate an improved linear readout performance when training a ResNet-backbone using our theoretically grounded recommendations.  
Remarkably, we also demonstrate that by leveraging these insights, we can reduce the pretraining dataset size by up to 2$\times$ while maintaining downstream accuracy simply by using more data augmentations. 
Taken together, our work provides theoretically grounded recommendations that can be used to improve SSL convergence and efficiency.
\end{abstract}

\section{Introduction}


Unsupervised representation learning, i.e., learning features without human-annotated labels, is critical for progress in computer vision. Modern approaches, grouped under the \textit{self-supervised learning (SSL)} umbrella, build on the core insight that similar images should map to nearby points in the learned feature space -- often termed as the \textit{invariance criterion}. Current SSL methods can be broadly categorized into contrastive and non-contrastive algorithms, based on whether they formulate their loss functions using negative samples or not, respectively. 

Despite this difference in their loss formulations, recent theoretical work has established an equivalence between the contrastive and non-contrastive SSL frameworks \citep{garrido2022duality}. 
This work shows that these different SSL formulations are ultimately minimizing a loss that encourages the learning of features that best match the data similarity kernel defined by the augmentations used.
However, this notion of theoretical equivalence holds only in the limit of ideal pretraining settings, i.e. with access to infinite data and compute budget, and the feature learning behavior of different SSL algorithms in practical scenarios is still not well understood. 
Therefore, researchers often use empirically driven heuristics that are theoretically ungrounded to design successful applications, such as (i) a high-dimensional projector head for non-contrastive SSL or (ii) the use of two augmentations per image \citep{balestriero2023cookbook}. 
Moreover, existing SSL algorithms are extremely data-hungry, relying on large-scale datasets \citep{russakovsky2015imagenet} or data engines \citep{oquab2023dinov2} to achieve good representations. 
While this strategy works exceptionally well in data-rich settings (like training on natural-images), it is not viable in data-constrained settings (like medical imaging), where samples are relatively scarce.




\begin{figure*}[t]
    \centering
    \includegraphics[width=0.9\textwidth]{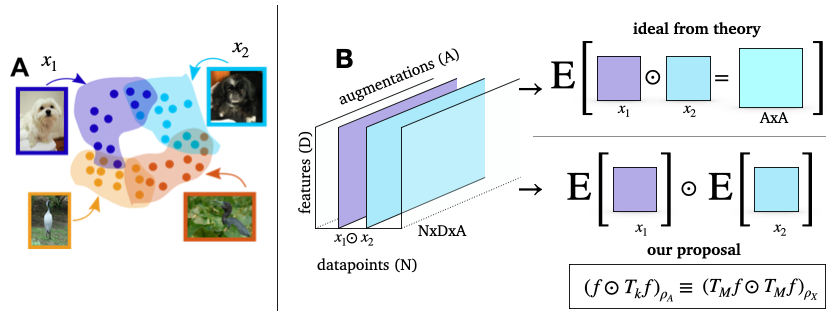}
    \caption{  Design of existing SSL algorithms relies on heuristics. {\bf (A)} Augmentation graphs are common in vision pretraining, providing generalizable features for downstream tasks. (B) We propose an equivalent loss function for SSL pretraining that recovers the same eigenfunctions more efficiently than existing approaches. }
    \label{fig:schematic}
\end{figure*}



With these challenges in mind, the primary focus of this work is to develop theoretically grounded recommendations for improving the effectiveness and efficiency of feature learning, both with respect to the required compute budget as well as data points.
Like any unsupervised representation learning algorithm, features learned through SSL depend on three factors: (i) implicit bias of the architecture, (ii) explicit invariance imposed by data augmentations, (iii) implicit bias of the learning rule.
While previous works predominantly studied the role of the model architecture capacity and loss function, and their interplay with data augmentations \citep{cabannes2023ssl,zhai2023understanding}, our approach broadens this perspective by also considering the role of the learning rule (gradient descent) in optimizing these loss functions.
Specifically, we extend the previous theoretical findings \citep{zhai2023understanding} that unified the desiderata of different SSL algorithms. We reformulate the idealized unifying loss to propose a functionally equivalent loss that is more compute-efficient (see \cref{fig:schematic}). Based on our loss formulation, we provide two practical recommendations that can help improve the efficiency of SSL pipelines while maintaining good performance. First, we show that optimizing the reformulated loss using gradient descent can often reduce the orthogonality among the learned embeddings, thereby leading to an inefficient use of the projector network's capacity. Consequently, we recommend using a stronger orthogonalization constraint to eliminate the requirement of high-dimensional projector heads, thereby significantly reducing the parameter overhead of good feature learning.
Second, we show that increasing the number of augmentations leads to a better estimate of the data similarity kernel. Consequently, we recommend using more augmentations to improve optimization convergence and learn better features earlier in training.




We empirically verify our theoretically grounded recommendations using the popular ResNet backbone on benchmark datasets: CIFAR, STL and Imagenet. 
Strikingly, we show that our multi-augmentation approach can learn good features even with \textit{half} of the samples in the pretraining dataset. 
Our recommendations provide a path towards making SSL pretraining more data and compute-efficient without harming performance and could unlock massive performance gains in data-constrained setups.
In summary, our core contributions are as follows:
\begin{itemize}
    \item \textbf{Efficient SSL loss formulation:} We propose an functionally equivalent and compute-efficient formulation of the SSL desiderata that yields the eigenfunctions of the augmentation-defined data similarity kernel.
    \item \textbf{Role of heuristics:} Based on our loss formulation and the implicit bias of gradient descent in optimizing this loss, we provide a mechanistic explanation for the role of projector dimensionality and the number of data augmentations. Consequently, we empirically demonstrate that low-dimensional projector heads are sufficient and that using more augmentations leads to learning better representations.
    \item \textbf{Data efficient SSL:} Leveraging the convergence benefits of the multi-augmentation SSL framework, we empirically demonstrate that we can learn good features with significantly smaller datasets (up to $2\times$) without harming downstream linear probe performance.

\end{itemize}

\section{Preliminaries}

{\bf Existing SSL approaches in computer vision}
In recent years machine learning researchers have developed a number of effective approaches for learning from data without labels. The most popular approaches use augmentations of data points as targets for themselves. One of the first was a Simple framework for Contrastive Learning (SimCLR), which relied on an infoNCE loss with augmentations of an image as positive targets and augmentations of other images as negative samples (to construct the contrastive loss) \citep{chen2020simple}. Other works have relied on non-contrastive approaches, notably BarlowTwins \citep{zbontar2021barlow} and VICReg \citep{bardes2021vicreg}. BarlowTwins, which was inspired by the ideas of the neuroscientist Horace Barlow (cite), also uses augmentations of images, but it instead aims to optimize the covariance structure of the representations in order to reduce redundancies in the feature space \citep{zbontar2021barlow}. Variance Invariance Covariance Regularization (VICReg) was a modification of BarlowTwins that added a variance term in the loss in order to ensure that every feature dimension has a finite variance \citep{bardes2021vicreg}. In this paper we will focus on non-contrastive methods like BarlowTwins and VICReg, but in line with previous work \citep{zhai2023understanding}, we also consider how these approaches relate to contrastive methods like SimCLR.   

{\bf Formalizing the self-supervised learning problem}
Now, we will formalize the unsupervised representation learning problem for computer vision. 
In particular, we assume access to a dataset 
$\mathcal{D} = \{ {x_1, x_2, ..., x_n } \}$
with $x_i \in \mathbb{R}^p$ consisting of unlabeled images.
The objective is to learn a $d$-dimensional representation ($d < p$) that is useful across multiple downstream applications. 
We focus on learning the parameters of a deep neural network $f_\theta \in \mathcal{F}_\Theta$, using the multi-augmentation SSL framework, wherein multiple views of an image are used to optimize the pretraining loss function, $\mathcal{L}_{pretrain}(f_\theta, \mathcal{D})$

{\bf Non-Contrastive Self-Supervised Learning} 
(NC-SSL) algorithms impose invariance to data augmentations, while imposing regularization on the geometry of the learned feature space. 
More generally, $\mathcal{L}_{pretrain}$ can be formulated with two terms (i) $\mathcal{L}_{invariance}$: to learn invariance to data augmentations and (ii) $\mathcal{L}_{collapse}$: regularization to prevent collapsing the feature space to a trivial solution.
\begin{equation}
    \mathcal{L}_{pretrain} := \mathcal{L}_{invariance} + \beta \mathcal{L}_{collapse}
    \label{eq:SSL2}
\end{equation}
where $\beta$ denotes a hyperparameter that controls the importance of the collapse-preventing term relative to the invariance term.
This formulation separates features that are invariant to the augmentations from those that are sensitive to them.
Intuitively, the ideal feature space is more sensitive to semantic attributes (e.g. ``that's a dog'') and less sensitive to irrelevant attributes (e.g. ``direction the dog is facing''), facilitating generalization to new examples. 

{\bf Data Augmentation graph} was introduced by Haochen \textit{et al.} \citep{haochen2021provable} to analyze contrastive losses, like SimCLR \citep{chen2020simple}. 
Briefly, we define a graph $\mathcal{G(A, W)}$ that captures the relationship between images derived from all possible data augmentations.
The vertex set ($\mathcal{A},\rhoA$) is each augmented sample in a dataset, $\mathcal{X}$, and the adjacency matrix, $\mathcal{W}$, denotes the similarity between pairs of vertices. 
Let $x_0$ be an image in $\mathcal{X}$, and let $z = M(x_0) \in \mathcal{A}$ be a random data augmentation of the image, $x_0$.
We define the probability density of reaching $z$ from $x_0$ via a choice of mapping $M$: 
\begin{align}
    p(z \mid  x_0) = \PP( z = M(x_0)),
\end{align}
Since the mapping is not generally invertible (e.g., cropping), we observe that $p(x_0 \mid z) \not = p(z \mid x_0)$. 
Using this definition, we now formally define the strength of the edge between nodes $x,z \in \mathcal{A}$ of the augmentation graph as the joint probability of generating augmentations $x,z$ from the same image $x_0 \sim \rhox$. Notably, the edge strength of the (degree-normalized) augmentation graph is equivalent to the data similarity kernel, defined in \citep{zhai2023understanding}.
Formally,
\begin{equation}
    k^{DAF}(x,z) = w_{xz} := \EE_{x_0 \sim \rhox} \left[ \frac{p(x \mid x_0)}{p(x)} \frac{p(z \mid x_0)}{p(z)} \right]
\end{equation}
The magnitude of $w_{xz}$ captures the augmentation-defined similarity between $x$ and $z$. A higher value of $w_{xz}$ indicates that both patches are more likely to come from the same image and, thereby, are more similar. 



The desiderata of different SSL algorithms can be understood as learning features $F$ that best capture $k^{DAF}(x,z)$, i.e. $F(x)^TF(z) \approx k^{DAF}(x,z)$. Recent theoretical work has shows that different SSL losses can be formulated as special cases of the objective function that recovers the top-d eigenfunctions of $k^{DAF}(x,z)$ \citep{zhai2023understanding}.

\begin{equation}
    \mathcal{L}_{ssl}(F) = \mathbf{E}_{x,z \in \mathcal{A}}\left[(k^{DAF}(x,z) - F(x)^TF(z))^2 \right]
    \label{eq:zhai_loss_simplified}
\end{equation}


Note that all rotations of $F$ that don't change its span define an equivalence class of solutions to \Cref{eq:zhai_loss_simplified} and make no difference for the downstream generalization of a linear probe. 
Based on this insight, we define an equivalence among learned feature spaces:
\begin{definition}\label{defn.equivalent}
Let $F(x) = (f_1(x), \dots f_d(x))$ be a $d$-dimensional feature vector (a vector of functions).  Define the subspace 
\begin{equation}\label{VofF}
	V = V(F) = \{  h: \XX \to \R  \mid h(x) = w\cdot F(x),\quad w\in \R^d \}
\end{equation}
to be the span of the components of $F$. Given an $n$-dimensional feature vector,  $G(x) = (g_1(x), \dots, g_n(x))$  we say the features $G$ and $F$ are equivalent, if $V(F) = V(G)$. 
\end{definition}



\section{Implicit bias of non-contrastive SSL loss and optimization}

We extend the recent theoretical results \citep{zhai2023understanding} to propose a compute-efficient reformulation of the loss function of the SSL desiderata that yields equivalent features, i.e. the functions spanning the eigenfunctions of the augmentation-defined data similarity kernel, $k^{DAF}$. Furthermore, we study the role of gradient descent in optimizing this loss function and uncover a selection and primacy bias in feature learning. Specifically, we find that gradient descent tends to learn the dominant eigenfunctions (eigenfunctions corresponding to larger eigenvalues) earlier during training, and often over-represents these eigenfunctions under weak orthogonalization constraints.

Consequently, we propose employing a stronger orthogonalization constraint during optimization when using a low-dimensional projector to ensure that learned features are equivalent to those learned with a high-dimensional projector.
Furthermore, we argue that using more augmentations improves our sample estimate of $k^{DAF}$, thereby aiding the eigenfunction optimization problem. 
We dedicate the rest of this section to highlight our key theoretical insights, and practical recommendations that follow them.


\subsection{Features in terms of data augmentation kernels}

Let us define a kernel operator, $T_k$, for a positive semi-definite data augmentation kernel, $k^{DAF}$.
\begin{equation}\label{Tkdefn}
	T_k f(x) =\EE_{z\sim \rhox} [ k(z,x) f(z) ]
\end{equation}
such that \cref{eq:zhai_loss_simplified} can be equivalently written as (Equation 5 of \cite{zhai2023understanding})
\begin{equation}
    \mathcal{L}_{ssl}(F) = \langle F, (I - T_k) F \rangle_{\rho_\mathcal{A}} 
    \label{eq:zhai_loss}
\end{equation}
We can now use Mercer's theorem to factorize $k^{DAF}$ into corresponding spectral features $G: \XX \to \ell_2$ (where $\ell_2$ represents square summable sequences) \cite{deng2022neural,deng2022neuralef,pfau2018spectral}. However, note that computing $k^{DAF}$ (or $T_k$) is expensive as it requires computing the overlap among all augmentations of every pair of data points.
Instead of computing the eigenfunctions of $T_k$ directly, we propose using an alternative operator $T_M$:
\begin{equation} \label{TDAf}
	\Op_M f(x) 	= \EE_{x_0 \sim M(x)} \left[f(x_0) \right] = \sum_{x_0} \left[ p(x_0 \mid x) f(x_0) \right ]
\end{equation}
which averages the values of the function, $f$, over the augmented images $x_0 = M(x)$ of the data, $x$. 
We show that $T_M^TT_M$ is equivalent to $T_k$, and therefore $T_M$ and $T_k$ have shared eigenfunctions.


\begin{theorem}
Let $G(x)$ be the infinite Mercer features of the backward data augmentation covariance kernels, $k^{DAB}$. Let $F(x) = (f_1(x), \dots, f_{N_k}(x))$ be the features given by minimizing the following data augmentation invariance loss
\begin{align}\label{eq:ourloss}
L(F) &= \sum_{i=1}^{N_k} \|T_M f_i - f_i\|^2_\Lt , \quad \text{subject to }\quad (f_i, f_j)_\Ltr = \delta_{ij} 
\end{align}
which includes the orthogonality constraint. Then, $V(F) \subset V(G)$ ,  $\lim_{N_k\to\infty} V(F) = V(G)$. 
\label{theorem:eigen}
\end{theorem}

As shown in the \Cref{sec:kernel_SSL_theory}, $L(F)$ is equivalent to a constrained optimization formulation of the BarlowTwins loss. 
Furthermore, $L(F)$ with the additional constraint that $(f_i,f_i) \geq \gamma$ $\forall i \in \{1, 2 \dots N_k\}$ is the constrained optimization formulation of the VICReg loss. 

\subsection{The implicit bias of gradient descent}
Next, we investigate how the use of gradient descent for optimizing $L(F)$ influences the characteristics of the learned feature space, $V(F)$. Given the similarity in its form with that of the BarlowTwins loss, we build on recent findings that demonstrate the sequential nature of learning eigenfunctions when optimizing the BarlowTwins loss under a strong orthogonalization regularization \citep{simon2023stepwise}.
Since strong orthogonalization is seldom used in practice due to instabilities in training \citep{zbontar2021barlow,bardes2021vicreg}, we believe studying the learning dynamics under weak orthogonalization regularization (i.e. low values of $\beta$ in \cref{eq:SSL2}) is more relevant to provide recommendations for practitioners.


\begin{theorem}
    (Informal) Let us denote the span of the feature space at initialization as $V(F_0)$ and after training as $V(F_T)$.
    For small initialization of the network's weights, the alignment of $V(F_T)$ with the eigenfunctions of $T_k$ depend on two factors: (i) alignment of $V(F_0)$ with the eigenfunctions of $T_k$; (ii) singular values of $T_k$.
    \label{theorem:dynamics}
\end{theorem}
Under weak orthogonalization constraints, the network tends to learn features that are strongly aligned with eigenfunctions corresponding to large singular values. 
We refer to this property as the ``selection'' bias of gradient descent, wherein gradient descent selects certain eigenfunctions based on the corresponding singular values. 
This selection bias leads to redundancy among the learned feature space, thereby reducing the effective dimensionality of the network's output space compared to its ambient dimensionality. 
We will leverage this finding to improve the parameter overhead of good feature learning using BarlowTwins and VICReg loss frameworks.

\subsection{Takeaway 1: Low-dimensional projectors can yield good representations}
\label{cor:low_pdim}

Given the proximity of the formulation of \cref{eq:ourloss} to that of BarlowTwins and VICReg losses, we will leverage existing heuristics that have been shown to work in practice. 
As such, BarlowTwins and VICReg frameworks call for high-dimensional projectors while using a weak orthogonalization regularization to facilitate good feature learning. 
We know, from \Cref{theorem:eigen}, that the eventual goal of these frameworks is to learn the eigenfunctions of the underlying data similarity graph.
For example, since the intrinsic dimensionality of Imagenet is estimated to be $\sim 40$ \citep{pope2020intrinsic}, it is not unreasonable to expect that the span of desired features would be of similar dimensionality. 
It is, thus, intriguing that the current practice would suggest using an $\sim 8192$-dim projector head to capture the intricacies of the corresponding augmentation-defined data similarity kernel. 
This discrepancy can be explained by analyzing the learning dynamics, as in \Cref{theorem:dynamics}. 
Notably, a high-dimensional projector is likelier to have a greater initialization span than its low-dimensional counterpart, thereby increasing the alignment between $V(F_0)$ and relevant eigenfunctions of $T_k$.
We hypothesize that a stronger orthogonalization constraint for low-dimensional projectors can rectify this issue, reducing the redundancy in the network's output space and rendering it sufficient for good feature learning.

\subsection{Takeaway 2: Multiple augmentations improve kernel approximation}
\label{cor:multi_augs}
By comparing the invariance criterion formulation in the standard BarlowTwins and VICReg losses to \Cref{eq:zhai_loss}, it can be inferred that current practices use a sample estimate of $T_k$.
Using only two augmentations per sample yields a noisy estimate of $T_k$, yielding spurious eigenpairs \citep{vershynin2010introduction} (see \Cref{sec:multi_augs_theory}). 
These spurious eigenpairs add stochasticity in the learning dynamics, and coupled with \Cref{theorem:dynamics}, increase the redundancy in the learned feature space \citep{chen2023stochastic}.
We hypothesize that improving this estimation error by increasing the number of augmentations could alleviate this issue and improve the speed and quality of feature learning.

Of course, increasing the number of augmentations ($m$) in the standard BarlowTwins and VICReg loss improves the estimate of $T_k$ but comes with added compute costs -- a straightforward approach would involve calculating the invariance loss for every pair of augmentations, resulting in $\mathcal{O}(m^2)$ operations. 
However, \Cref{theorem:eigen} proposes an alternative method that uses the sample estimate of $T_M$, thereby requiring only $\mathcal{O}(m)$ operations, and hence is computationally more efficient while yielding functionally equivalent features (see \Cref{sec:kernel_SSL_theory}).
In summary, \Cref{theorem:eigen} establishes a mechanistic role for the number of data augmentations, paving the way for a computationally efficient multi-augmentation framework:
\begin{equation}
    \widehat{L}(F) = \EE_{x \sim \rhox} \left[\sum_{i=1}^{N_k} \sum_{j=1}^m \| \overline{f_i (x)} - f_i (x_j)\|^2_\Lt \right] ,\quad \text{subject to }\quad (f_i, f_j)_\Ltr = \delta_{ij}    
    \label{eq:ma_ncssl}
\end{equation}

where $\overline{f_i (x)} = \frac{1}{m}\sum_{j=1}^m f_i(x_j)$ is the sample estimate of $T_M f_i(x)$.

\section{Experiments}

In our experiments, we seek to (i) provide empirical support for our theoretical insights and (ii) present practical primitives for designing efficient SSL routines. Since our proposed loss function is closest to the formulation of BarlowTwins/VICReg, we present empirical evidence comparing our proposal to these baselines. In summary, with extensive experiments across learning algorithms (BarlowTwins \& VICReg) and training datasets (CIFAR-10, STL-10 \& Imagenet-100), we establish the following:
\begin{itemize}
    \item {\bf low-dimensional projectors} can yield {\it good representations}.
    \item {\bf multi-augmentation} improves downstream accuracy, as well as convergence rate.
    \item multi-augmentation {\bf improves sample efficiency} in SSL pretraining, i.e., recovering similar performance with significantly fewer unique unlabelled samples.
\end{itemize} 

{\bf Experiment Setup}: We evaluate the effectiveness of different pretraining approaches using image classification as the downstream task. Across all experiments, we pretrain a Resnet feature encoder backbone for 100 epochs (see \Cref{sec:longer_pretraining} for longer pretraining results) and use linear probing on the learned representations\footnote{Code:
\href{https://github.com/kumarkrishna/fastssl}{https://github.com/kumarkrishna/fastssl}}. All runs are averaged over 3 seeds; error bars indicate standard deviation. Other details related to optimizers, learning rate, etc., are presented in the \Cref{sec:implementation}.

\subsection{Low-dimensional projectors can yield good representations}

\begin{figure}[!ht]
    \centering
    \includegraphics[width=\linewidth]{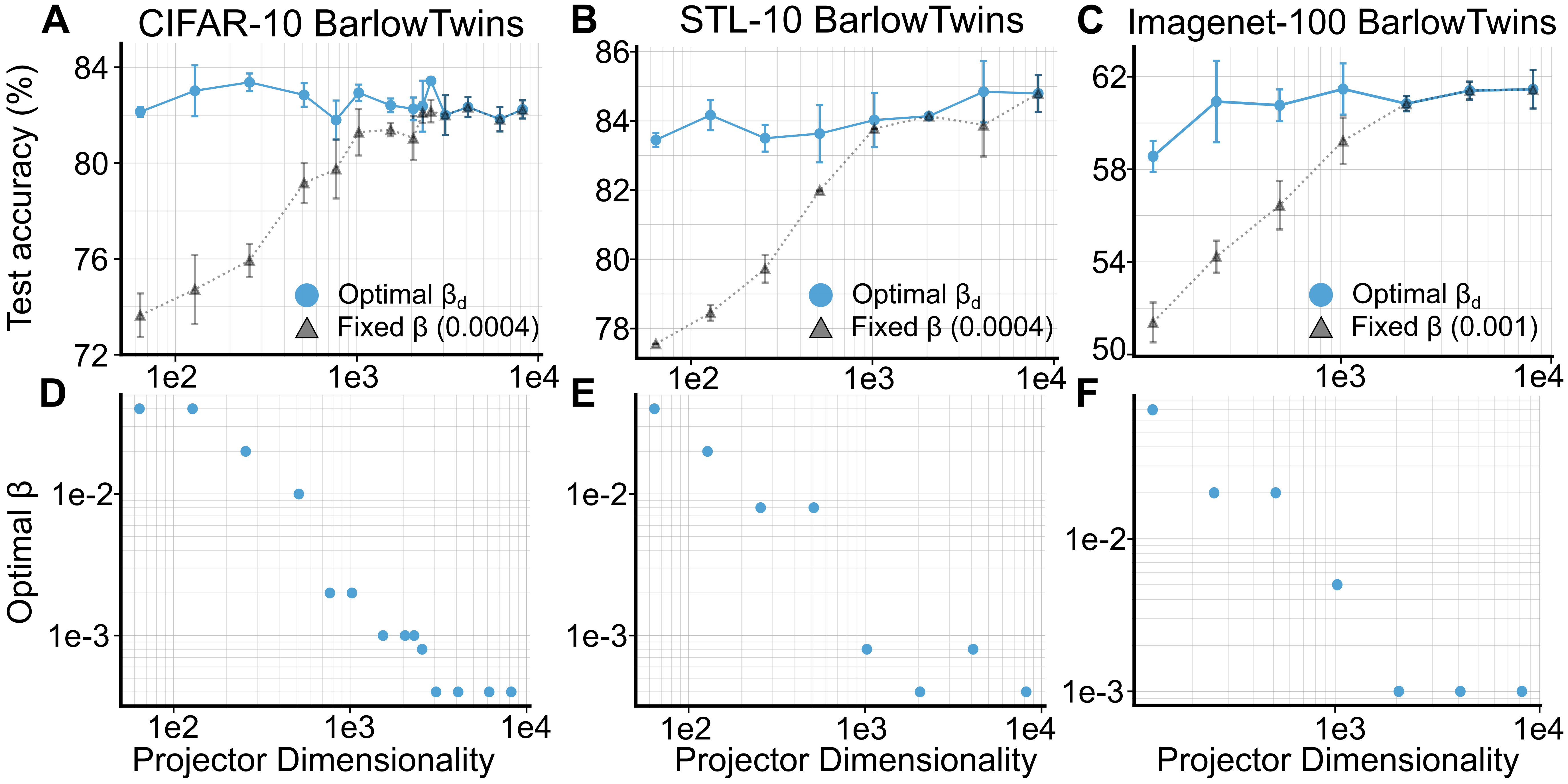}
    \vspace{-0.5cm}
    \caption{Low-dimensional projectors can yield good representations. We demonstrate that using a higher orthogonality constraint, $\beta$, for lower projector dimensionality can achieve similar performance over a wide range of projector dimensions ($d$).}
    \label{fig:fig3_lambda_pdim}
\end{figure}


\begin{table}[!ht]
 \small
  \centering
  \begin{tabular}{c|c|c|c|c}
    \hline
    \multicolumn{1}{c|}{\multirow{2}{*}{{\it pdim}}} & \multicolumn{2}{c|}{\bf Barlow Twins} & \multicolumn{2}{c}{\bf VICReg} \\
    \multicolumn{1}{c|}{} & fixed $\beta$ & optimal $\beta^*$  & fixed $\beta$ & optimal $\beta^*$ \\
    \hline
    \vspace{-2mm} \\
    64 & $73.6 \pm 0.9$ & $82.1 \pm 0.2$ & $68.9 \pm 0.2$ & $81.9 \pm 0.1$ \\
    256 & $75.9 \pm 0.7$ & \cellcolor{lightgray}{\bf 83.4 $\pm$ 0.4} & $75.3 \pm 0.2$ & $81.9 \pm 0.3$ \\
    1024 & $81.3 \pm 1.0$ & $82.9 \pm 0.3$ & $79.2 \pm 0.9$ & \cellcolor{lightgray}{\bf 82.5 $\pm$ 0.9} \\
    8192 & $82.2 \pm 0.4$ & $82.2 \pm 0.4$ & $80.4 \pm 1.5$ & $80.4 \pm 1.5$  \\
    \hline
  \end{tabular}
  \caption{Optimizing for orthogonality appropriately allows low-dimensional projectors to match the performance (on CIFAR-10) of much higher-dimensional projectors.}
  \vspace{-5mm}
  \label{tab:multicol_example}
\end{table}

Existing works recommend using high-dimensional MLPs as projectors (e.g., d=8192 for Imagenet in \cite{zbontar2021barlow, bardes2021vicreg}), and show significant degradation in performance when using lower-dimensional projectors for a fixed redundancy coefficient ($\beta$). To reproduce this result, we run a grid search to find the optimal coefficient $(\beta^*_{8192})$ for $d=8192$ and show that performance progressively degrades for lower $d$ if the same coefficient $\beta^*_{8192}$ is reused for $d \in \{64, 128, 256, 512, 1024, 2048, 4096, 8192\}$. 

Our insights in \Cref{cor:low_pdim} suggest low-dimensional projectors should recover similar performance with appropriate orthogonalization. To test this, we find the best $\beta$ by performing a grid search independently for each $d \in \{64, 128, 256, 512, 1024, 2048, 4096, 8192\}$. As illustrated in \Cref{fig:fig3_lambda_pdim}, using low-dimensional projectors yield features with similar downstream task performance, compared to the features obtained using high-dimensional projectors. 
Strikingly, we also observe that the optimal $\beta_{d} \propto 1/d$, which aligns with our theoretical insights.

\begin{mdframed}
{\bf Recommendation:} 
Start with low-dimensional projector, using $\beta=\mathcal{O}(\frac{1}{d})$, and sweep over $(pdim=d, \beta=\mathcal{O}\left(\frac{1}{d}\right))$ if needed.
\end{mdframed}


\subsection{Multiple Augmentations Improve Performance and Convergence}
    
Although some SSL pretraining approaches, like SWaV \citep{caron2018deep}, incorporate more than two views, the most widely used heuristic in non-contrastive SSL algorithms involves using two views jointly encoded by a shared backbone. In line with this observation, our baselines for examining the role of multiple augmentations use two views for computing the cross-correlation matrix.
\begin{figure}[htbp]
    \centering
    \includegraphics[width=\linewidth]{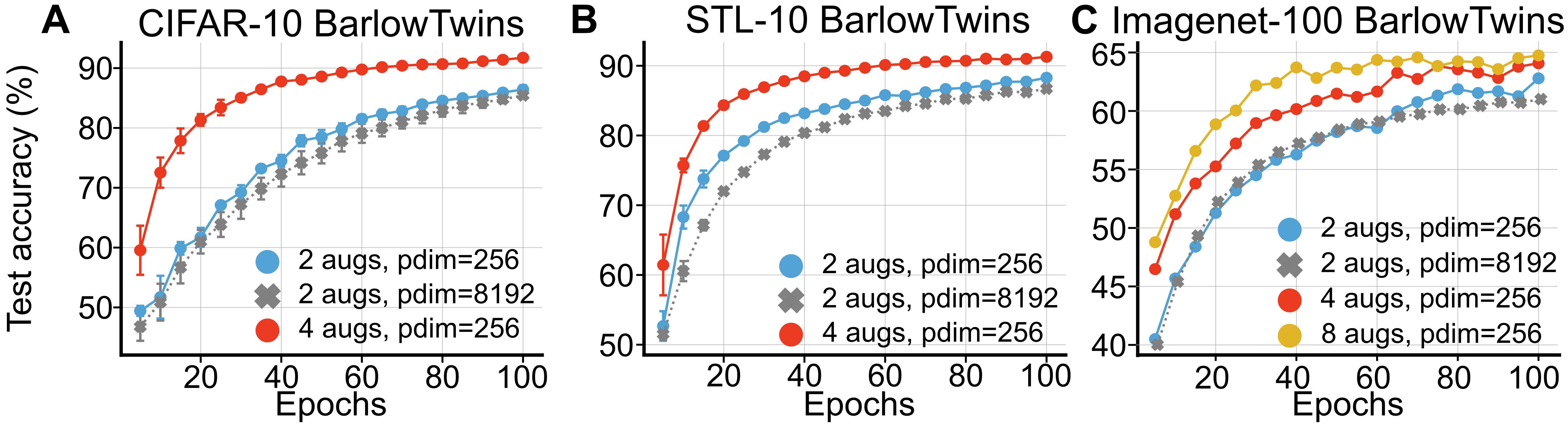}
    \vspace{-0.25cm}
    \caption{Using multiple augmentations improves representation learning performance and convergence. (A-C) Across BarlowTwins for CIFAR-10, STL-10 and Imagenet-100 pretraining, using 4 augmentations instead of 2 helps improve performance. Please see \Cref{sec:full_dset_pretraining} for more results.}
    \label{fig:multipatch}
\end{figure}

\begin{table}[!ht]
 \small
  \centering
  \begin{tabular}{|c|c|c|c|}
    \hline
    augs & pdim & \textbf{BarlowTwins} & \textbf{VICReg} \\
     &  & Time (min) & Time (min) \\
    \hline
    \vspace{-2mm} \\
    2 & 8192 & $99.36 \pm 0.01$ & $94.36 \pm 0.01$ \\
    2 & 256 & $62.34 \pm 0.06$ & $51.73 \pm 0.04$ \\
    4 & 256 & \cellcolor{lightgray}{\bf 43.09 $\pm$ 0.20} & \cellcolor{lightgray}{\bf 39.02 $\pm$ 0.05} \\
    \hline
  \end{tabular}
  \caption{Using multiple augmentations yields faster convergence, with reduced time to reach baseline performance on CIFAR-10, i.e. performance of feature encoder pretrained with an 8192-dim projector and 2 augmentations.}
  \label{tab:fig4_table}
\end{table}

To demonstrate the role of multiple augmentations in pretraining, we adapt the invariance criterion of BarlowTwins/VICReg to be in line with \Cref{eq:ma_ncssl}.
In particular, for $\#augs \in \{2, 4, 8\}$, we pretrain a Resnet-50 encoder with our proposed loss. Building on the insight from the previous section, we use a 256-dimensional projector head for all multi-augmentation experiments. 

In \Cref{fig:multipatch}, we track the downstream performance of the pretrained models across training epochs.
For performance evaluation, we use the linear evaluation protocol as outlined by \cite{chen2022intra}. 
\Cref{fig:multipatch}(A-C) shows that pretraining with multiple augmentations outperforms the 2-augmentation baseline. Furthermore, we observe that the four-augmentation pretrained models converge faster (both in terms of the number of epochs and wall-clock time) than their two-augmentation counterparts (see \Cref{fig:multipatch}(D-F)). Additionally, we show in \Cref{sec:swav_pretraining} that our framework can also be applied to multi-augmentation settings like SWaV, where not all augmentations are of the same resolution.

\begin{mdframed}
    {\bf Recommendatation:} Using multiple augmentations ( $>2$) is likely to improve convergence as well as downstream accuracy.
\end{mdframed}



\subsection{Sample Efficient Multi-augmentation Learning}

Data Augmentation can be viewed as a form of data inflation, where the number of training samples is increased by $k$ (for $k$ augmentations). In this section, we examine the role of multi-augmentation in improving sample efficiency. In particular, we are interested in understanding if the same performance can be achieved with a fraction of the pretraining dataset, simply by using more augmentations.

\begin{figure}[!ht]
    \centering
    \includegraphics[width=\linewidth]{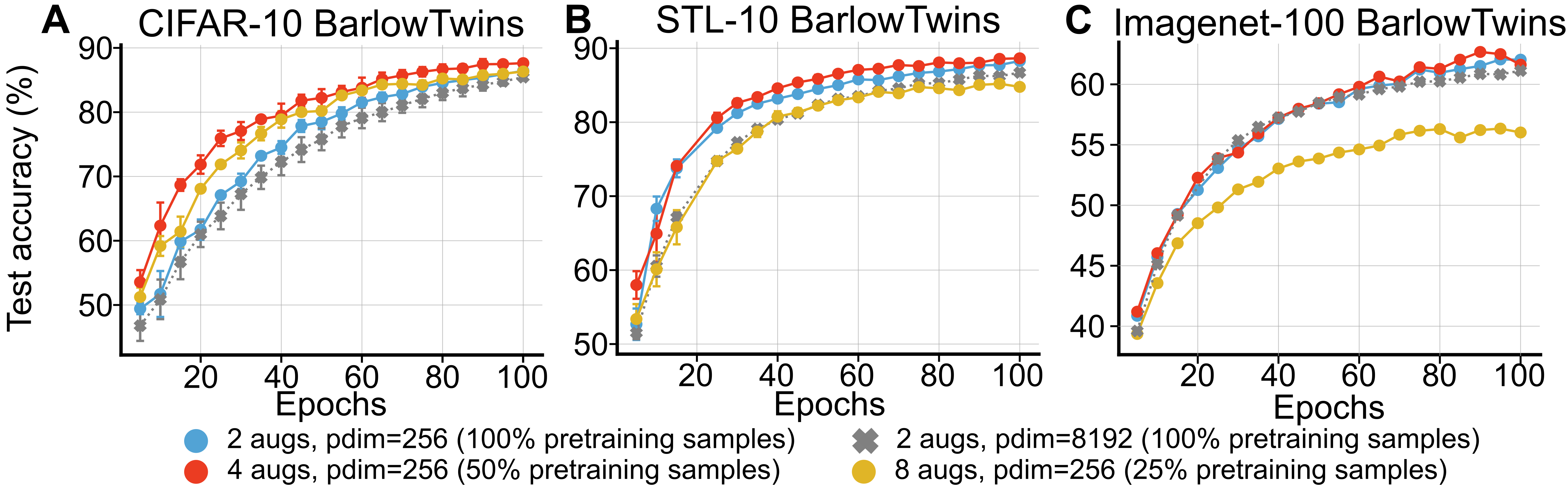}
    \caption{Multi-augmentation improves sample efficiency, recovering similar performance with significantly fewer unique samples in the pretraining dataset. Across BarlowTwins pretraining on CIFAR-10, STL-10 and Imagenet-100 for the same effective dataset size ($\#augs \times \#unique\_samples$), using more patches improves performance at the same epoch (A-C). However, a tradeoff exists wherein more data augmentations fail to improve performance in the scarce data regime.}
    \label{fig:multiPatch_fracTrain}
\end{figure}

\begin{table}[!ht]
 \small
  \centering
  \begin{tabular}{|c|c|c|c|c|}
    \hline
    augs & pdim & Percentage & \textbf{BarlowTwins} & \textbf{VICReg} \\
     &  & of Dataset & Time (min) & Time (min) \\
    \hline
    \vspace{-2mm} \\
    2 & 8192 & 100 \% & 63.43 $\pm$ 0.02 & 66.05 $\pm$ 0.01 \\
    2 & 256 & 100 \% & 39.52 $\pm$ 0.04 & 40.64 $\pm$ 0.04 \\
    4 & 256 & 50 \% & 28.25 $\pm$ 0.01 & \cellcolor{lightgray}{\bf32.39 $\pm$ 0.01} \\
    8 & 256 & 25 \% & \cellcolor{lightgray}{\bf27.74 $\pm$ 0.01} & 34.76 $\pm$ 0.01 \\
    \hline
  \end{tabular}
  \caption{Time required to pass 80\% accuracy on CIFAR-10 when pretraining on fraction of the dataset, while using multiple augmentations. See \Cref{fig:pareto} for further discussion.}
  \label{tab:fig5_table}
\end{table}

To examine the relation between the number of augmentations and sample efficiency, we fixed the effective size of the inflated dataset. This is achieved by varying the fraction of the unique samples in the pretraining dataset depending on the number of augmentations $k \in \{2, 4, 8\}$, e.g., we use 50\% of the dataset for 4 views. We then evaluate the performance of the pretrained models on the downstream task, where the linear classifier is trained on the same set of labeled samples. Strikingly, \Cref{fig:multiPatch_fracTrain} shows that using multiple augmentations can achieve similar (sometimes even better) performance with lesser pretraining samples, thereby indicating that more data augmentations can be used for feature learning to compensate for smaller pretraining datasets.

\begin{mdframed}
    {\bf Recommendation:}  In a low-data regime, using diverse \& multiple augmentations can be as effective as acquiring more unique samples.
\end{mdframed}

\section{Related Work}
 
{\bf Self-Supervised Pretraining} requires significant compute resources and most practitioners rely on empirical heuristics (see SSL cookbook \citep{balestriero2023cookbook} for a summary). 
While recent advances in SSL theory explore learning dynamics in linear (or shallow) models \citep{tian2021understanding,tian2020understanding}, with a focus on understanding dimensionality collapse \citep{ghosh2022investigating,jing2021understanding}, the theoretical underpinnings of most of the heuristics considered essential for good feature learning, are missing. 

\textbf{Contrastive SSL} has received more theoretical attention, owing to its connection with metric learning and noise contrastive estimation \citep{li2021self, DBLP:conf/nips/BalestrieroL22, DBLP:conf/iclr/0001HM23}. In particular, HaoChen \textit{et al.} \citep{haochen2021provable} provide a theoretical framework for the SimCLR loss from an augmentation graph perspective, which leads to practical recommendations. Subsequently, Garrido \textit{et al.} \citep{garrido2022duality} establish a duality between contrastive and non-contrastive learning objectives, further bridging the gap between theory and practice.

\textbf{Non-contrastive SSL} algorithms' theoretical foundations have received more attention recently \citep{cabannes2023ssl,zhai2023understanding}. Prior works \citep{agrawal2022alpha, garrido2022duality, garrido2023rankme} have demonstrated that with modified learning objectives, low-dimensional projectors yield representations with good downstream performance. Similarly, previous works have demonstrated notable performance boosts when using a multi-patch framework in contrastive \citep{dwibedi2021little} and non-contrastive SSL \citep{caron2018deep, yerxa2023learning}. However, the theoretical basis for the benefits and trade-offs of either low-dimensional projectors or multiple augmentations is largely unclear. It is worth noting that Schaeffer \textit{et al.} \citep{schaeffer2024towards} present an information-theoretic perspective of the recently proposed non-contrastive SSL loss that leverages multiple augmentations, namely MMCR \citep{yerxa2023learning}, but the computational advantages of using multiple augmentations on the learning dynamics is an active area of research.

{\bf Deep Learning theory} has made significant strides in understanding the optimization landscape and dynamics of supervised learning \citep{advani2020high}. In concurrent works \citep{zhai2023understanding, cabannes2023ssl}, the interplay between the inductive bias of data augmentations, architectures, and generalization has been explored from a purely theoretical perspective, establishing an equivalence among different SSL losses \citep{zhai2023understanding}. Furthermore, Simon \textit{et al.} \citep{simon2023stepwise} used a more straightforward formulation of the BarlowTwins loss and investigated the learning dynamics in linearized models for the case when the invariance and orthogonalization losses have equal penalties. Although such a setting rarely used in practice, their approach serves as an inspiration for our work in studying the learning dynamics of non-contrastive SSL losses.

\section{Discussion}

{\bf Summary}: Our work builds on existing theoretical results that establish an equivalence among different SSL frameworks, and proposes a compute-efficient reformulation of the common SSL loss. Using this loss reformulation and a study of the optimization dynamics, we proposed practical recommendations to improve the sample and compute efficiency of SSL algorithms. Specifically, we recommended low-dimensional projectors with increased orthogonality constraints and multi-augmentation frameworks, and we verified the effectiveness of these recommendations empirically. It is worth noting that our multi-augmentation formulation improves the efficiency of learning without altering the desiderata of SSL, i.e. the network learns the same feature space using our proposed multi-augmentation framework as with the original SSL formulation in the limit of infinite pretraining budget. To demonstrate this equivalence between the original SSL loss and our proposed version, we show in \Cref{sec:longer_pretraining} that longer pretraining on the 2-augmentation loss leads to similar downstream performance as the multi-augmentation versions (4 and 8 augmentations).

We also showed that the multi-augmentation framework can be used to learn good features from fewer unique samples in the pretraining dataset simply by improving the estimation of the data augmentation kernel. This result has direct implications on improving the Pareto frontier of samples-vs-performance for SSL pretraining, wherein we can achieve better downstream performance when limited number of samples are available in the pretraining dataset.

\begin{figure}[htbp]
    \centering
    \begin{minipage}{0.48\linewidth}
        {\bf Pareto Optimal SSL}
       In the context of sample efficiency, training a model using two augmentations with different fractions of the dataset leads to a natural Pareto frontier, i.e., training on the full dataset achieves the best error but takes the most time ({\bf Baseline (2-Aug)}). Our extensive experiments demonstrate that using more than two augmentations improves the overall Pareto frontier, i.e., achieves better convergence while maintaining accuracy ({\bf Multi-Aug}). Strikingly, as shown in \Cref{fig:pareto}, we observe that we can either use a larger pretraining dataset or more augmentations for a target error level. Therefore, the number of augmentations can be used as a knob to control the sample efficiency of the pretraining routine.
    \end{minipage}\hfill
    \begin{minipage}{0.48\linewidth}
        \centering
        \includegraphics[width=\linewidth]{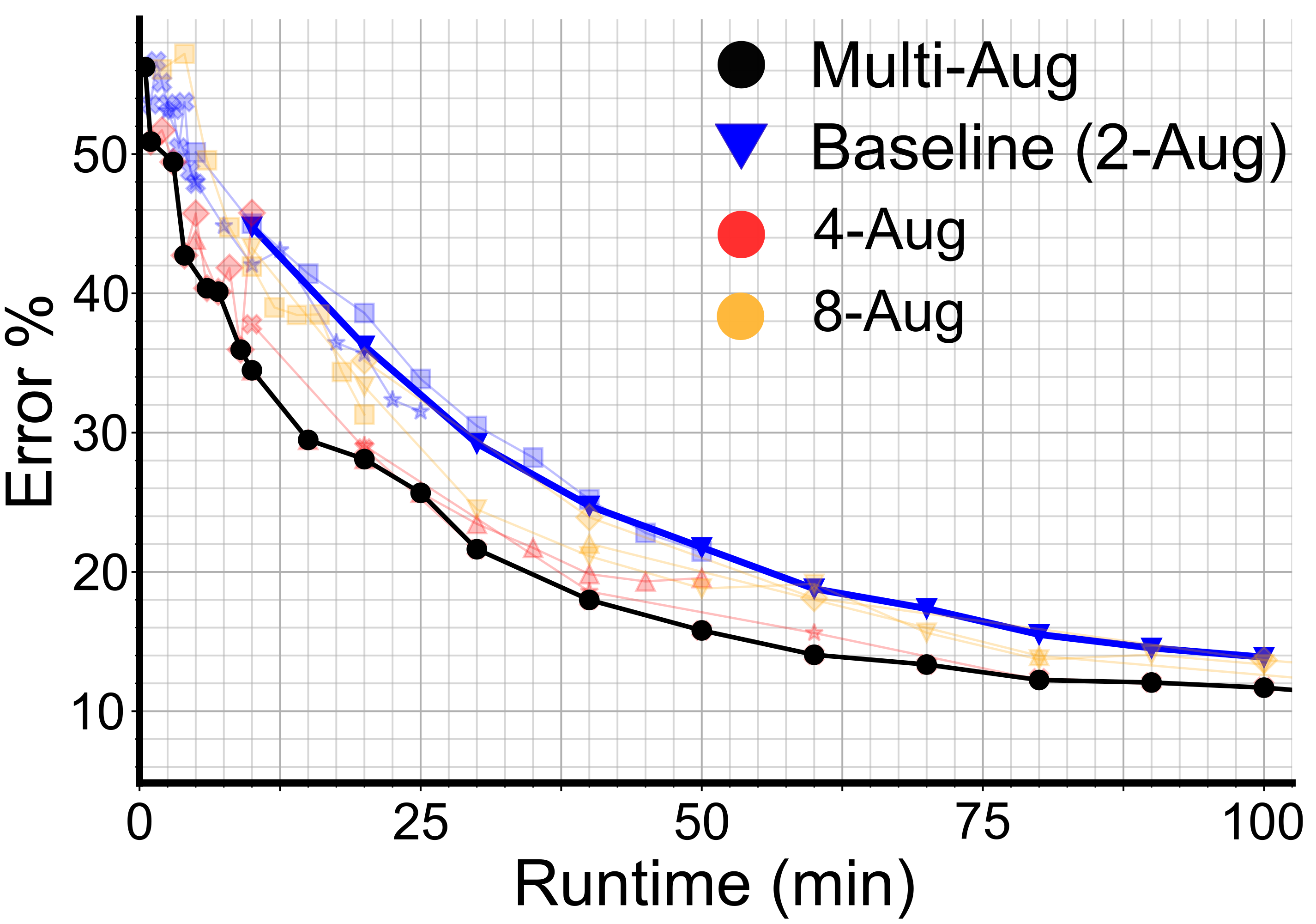}
        \caption{Using $>2$ augmentations with a fraction of the dataset improves overall Pareto frontier, speeding runtime up to $\sim2\times$.}
        \label{fig:pareto}
    \end{minipage}
\end{figure}

{\bf Connections to Downstream performance}: While our core theoretical results are aimed at accelerating convergence of the SSL loss itself, our empirical results highlight an improved downstream task performance earlier during pretraining. While this discrepancy might seem counter-intuitive at first, it is worth noting that the SSL loss inherent influences downstream performance as it encourages clustering of semantically similar images in the representation space. Such clustering properties in the representation space facilitates easier classification through methods k-nearest neighbors or linear decoding for a large number of tasks that rely on the semantic content of images. Previous works \cite{ghosh2022investigating, agrawal2022alpha, garrido2023rankme, thilak2024lidar} have discussed in detail how certain geometric properties of the learned representation space are connected to the linear classification performance for arbitrary decision boundaries, in expectation. However, an in-depth analysis of downstream tasks that are more amenable to linear decoding from the learned SSL representation space requires framing metrics of alignment between the pretraining objective (SSL desiderate) and the downstream task labels, and is an active area of research.

{\bf Open Questions}: Looking ahead, it would be exciting to extend this analysis to other categories of SSL algorithms, such as Masked AutoEncoders (MAE). Furthermore, our insights provide opportunities to explore sample-efficient methods that rely on less data, which is particularly important in critical domains such as medical imaging, where data is often relatively scarce and expensive.
On a different note, it is intriguing that animals often spend extended periods of time exploring novel objects, likely to gain multiple views of the object \citep{berlyne1950novelty, leger2013object}. Given the theoretical underpinnings of the computational benefits of multi-augmentation SSL outlined in our work, it would be exciting to develop models of biological learning that leverage these insights and enable sample-efficient continual learning in similar environments.

{\bf Limitations}: Our algorithm relies on multiple augmentations of the same image to improve the estimation of the data-augmentation kernel. Though this approach speeds up the learning process, it also adds some extra computational overhead, which means that the impact of faster learning on wall-clock time is less than might be hoped for. One way to mitigate the effects of this limitation would be to scale up to a multi-GPU setting, since the computations for each augmentation can be run on a separate GPU in parallel. This could help ensure that the improved speed of learning directly translates to a significantly reduced wall-clock time for training.

{\bf Impact Statement}: The goal of our work is to advance the general field of visual representation learning. Although there are potential downstream societal consequences of our work, we feel there are no direct consequences that must be specifically highlighted here.
\section*{Acknowledgements}
The authors would like to thank Arnab Kumar Mondal for insightful discussions that helped shape the project’s scope, Colleen Gillon for aesthetic contribution to the figures and Florian Bordes for helping setup the \href{https://github.com/facebookresearch/FFCV-SSL}{FFCV-SSL} library. The authors are also grateful to Aidan Sirbu, Chen Sun, Jonathan Cornford, Roman Pogodin, Zahraa Chorghay and the anonymous reviewers whose comments and suggestions have significantly enhanced the quality and presentation of the results. This research was generously supported by Vanier Canada Graduate Scholarship (A.G.); NSERC (Discovery Grant: RGPIN-2020-05105; RGPIN-2018-04821; Discovery Accelerator Supplement: RGPAS-2020-00031; Arthur B. McDonald
Fellowship: 566355-2022), Healthy Brains, Healthy Lives (New Investigator Award: 2b-NISU-8), and CIFAR Learning in Machines \& Brains Program (B.A.R.); Canada CIFAR AI Chair program (A.O. \& B.A.R.). The authors also acknowledge the material support of NVIDIA in the form of computational resources, as well as the compute resources, software and technical help provided by \href{https://mila.quebec/en}{Mila} (mila.quebec).

\bibliography{references}
\bibliographystyle{plain}
\newpage


\clearpage
\newpage
\section*{NeurIPS Paper Checklist}

\begin{enumerate}

\item {\bf Claims}
    \item[] Question: Do the main claims made in the abstract and introduction accurately reflect the paper's contributions and scope?
    \item[] Answer: \answerYes{} 
    \item[] Justification: Our main claims are backed by theoretical and empirical results. \Cref{theorem:eigen} presents our functionally-equivalent compute-efficient formulation of the SSL objective, and \Cref{theorem:dynamics} demonstrates the implicit bias of gradient descent during optimizing the SSL loss. Our empirical results demonstrate the utility of our theoretical insights in improving the parameter overhead of good feature learning, optimization convergence and the sample efficiency.   

\item {\bf Limitations}
    \item[] Question: Does the paper discuss the limitations of the work performed by the authors?
    \item[] Answer: \answerYes{} 
    \item[] Justification: We have a section on limitations in the discussion. 

\item {\bf Theory Assumptions and Proofs}
    \item[] Question: For each theoretical result, does the paper provide the full set of assumptions and a complete (and correct) proof?
    \item[] Answer: \answerYes{} 
    \item[] Justification: The formal statements alongside proofs are presented in the supplementary material (\Cref{sec:theory_general,sec:kernel_SSL_theory,sec:multi_augs_theory}).

    \item {\bf Experimental Result Reproducibility}
    \item[] Question: Does the paper fully disclose all the information needed to reproduce the main experimental results of the paper to the extent that it affects the main claims and/or conclusions of the paper (regardless of whether the code and data are provided or not)?
    \item[] Answer: \answerYes{} 
    \item[] Justification: We provide implementation details in the supplementary material (\Cref{sec:implementation}). We have also released our code base in the public github repo, \href{https://github.com/kumarkrishna/fastssl}{FastSSL}, to facilitate the implementation of our proposed framework.

\item {\bf Open access to data and code}
    \item[] Question: Does the paper provide open access to the data and code, with sufficient instructions to faithfully reproduce the main experimental results, as described in supplemental material?
    \item[] Answer: \answerYes{}. 
    \item[] Justification: We use open-access datasets, like CIFAR, STL and Imagenet. Our code base can be found in the public github repo, \href{https://github.com/kumarkrishna/fastssl}{FastSSL}
    \item[] Guidelines:

\item {\bf Experimental Setting/Details}
    \item[] Question: Does the paper specify all the training and test details (e.g., data splits, hyperparameters, how they were chosen, type of optimizer, etc.) necessary to understand the results?
    \item[] Answer: \answerYes{} 
    \item[] Justification: We present details of the experiment setup and results in Section 4 of the main paper, and additional implementation details in Appendix D.

\item {\bf Experiment Statistical Significance}
    \item[] Question: Does the paper report error bars suitably and correctly defined or other appropriate information about the statistical significance of the experiments?
    \item[] Answer: \answerYes 
    \item[] Justification: We report standard error bars, computed over 3 seeds, for all result plots and tables. 

\item {\bf Experiments Compute Resources}
    \item[] Question: For each experiment, does the paper provide sufficient information on the computer resources (type of compute workers, memory, time of execution) needed to reproduce the experiments?
    \item[] Answer: \answerYes{} 
    \item[] Justification: All our CIFAR and STL experiments were done on a single 48-GB RTX8000 GPU and all Imagenet experiments were performed on 2 40-GB A100 GPUs. All experiments were performed on the Mila cluster, aided by compute resources, software and technical help provided by Mila (mila.quebec).
    
\item {\bf Code Of Ethics}
    \item[] Question: Does the research conducted in the paper conform, in every respect, with the NeurIPS Code of Ethics \url{https://neurips.cc/public/EthicsGuidelines}?
    \item[] Answer: \answerYes{} 
    \item[] Justification: We adhere to the NeurIPS Code of Ethics in our work, and research in general.

\item {\bf Broader Impacts}
    \item[] Question: Does the paper discuss both potential positive societal impacts and negative societal impacts of the work performed?
    \item[] Answer: \answerYes{} 
    \item[] Justification: We add a statement on societal impact in the Discussion section. Since the goal of our work is to advance the general field of visual representation learning, we feel there are no direct consequences that must be specifically highlighted here. Although we recognize that there might be potential downstream consequences that warrant attention while building intelligent systems that leverage this work.

\item {\bf Safeguards}
    \item[] Question: Does the paper describe safeguards that have been put in place for responsible release of data or models that have a high risk for misuse (e.g., pretrained language models, image generators, or scraped datasets)?
    \item[] Answer: \answerNA{} 
    \item[] Justification: We do not release any new data or state-of-the-art models.

\item {\bf Licenses for existing assets}
    \item[] Question: Are the creators or original owners of assets (e.g., code, data, models), used in the paper, properly credited and are the license and terms of use explicitly mentioned and properly respected?
    \item[] Answer: \answerYes{} 
    \item[] Justification: We acknowledge and cite the datasets and model architectures used in this work. Our codebase is publicly available on our github repo, \href{https://github.com/kumarkrishna/fastssl}{FastSSL}. Moreover, our codebase relies on the Python packages of \href{https://pytorch.org/}{PyTorch}, \href{https://github.com/libffcv/ffcv}{FFCV} \citep{leclerc2022ffcv} and \href{https://github.com/facebookresearch/FFCV-SSL}{FFCV-SSL} \citep{bordes2023ffcv_ssl}, which are referred to in the github repo README.  

\item {\bf New Assets}
    \item[] Question: Are new assets introduced in the paper well documented and is the documentation provided alongside the assets?
    \item[] Answer: \answerNA{} 
    \item[] Justification: No new assets are released.

\item {\bf Crowdsourcing and Research with Human Subjects}
    \item[] Question: For crowdsourcing experiments and research with human subjects, does the paper include the full text of instructions given to participants and screenshots, if applicable, as well as details about compensation (if any)? 
    \item[] Answer: \answerNA{} 
    \item[] Justification: Our work does not involve crowdsourcing nor research with human subjects.

\item {\bf Institutional Review Board (IRB) Approvals or Equivalent for Research with Human Subjects}
    \item[] Question: Does the paper describe potential risks incurred by study participants, whether such risks were disclosed to the subjects, and whether Institutional Review Board (IRB) approvals (or an equivalent approval/review based on the requirements of your country or institution) were obtained?
    \item[] Answer: \answerNA{} 
    \item[] Justification: Our work does not involve research with human subjects.

\end{enumerate}

\newpage
\appendix
\section{Hilbert Space of functions}
\label{sec:theory_general}
\subsection{Functions and inner product space}

\begin{definition}
Given $\XX,\rhox$, and $f,g : \XX \to \R$, define the $\Lt$ inner product and norm, respectively,  
\begin{equation}\label{L2innerprod}
	(f,g)_{\Ltr} = \int f(x) g(x)d\rhox(x), \quad \|f\|^2_\Ltr = (f,f)_\Ltr
\end{equation}
Define 
$$
L^2(\rho,X) = \left\{ f: \XX \to \R \ \mid \|f\|^2_\Ltr < \infty   \right \}
$$
to be the (equivalence class) of functions with finite $\Ltr$ norm. 
\end{definition}

\subsection{Spectral theory}
In this section we quote the relevant (abstract) Hilbert Space theory.

\begin{definition}[Spectral Operator]
Given orthogonal functions, $\Phi = (\phi_i)_{i\in I}$ in $\Lt$,  
and non-negative $\Lambda = (\lambda_i)_{i\in I}$, with $\|\Lambda||_2^2  = \sum_{i\in I} \lambda_i^2 < \infty$.
Call $(\Phi, \Lambda)$ a spectral pair and define the corresponding spectral operator by
\begin{equation}\label{Spectral}
	\Op_{\Phi,\Lambda}(h)=\sum_{j=1}^{\infty} \lambda_j\left ( h, \phi_j\right) \phi_j, 
\end{equation}
\end{definition}

\begin{theorem}[Spectral Decomposition]
Suppose $H$ is a Hilbert space.
A symmetric positive-definite Hilbert-Schmidt operator $\Op:\Hh \to \Hh$ admits the spectral decomposition \eqref{Spectral}
with orthonormal $\phi_j$ which are the eigenfunctions of $\Op$, i.e. $\Op\left(\phi_j\right)=\lambda_j \phi_j$. The $\phi_j$ can be extended to a basis by adding a complete orthonormal system in the orthogonal complement of the subspace spanned by the original $\phi_j$. 	
\end{theorem}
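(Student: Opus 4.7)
The plan is to prove the Spectral Decomposition by combining compactness of Hilbert--Schmidt operators with the classical variational construction of eigenpairs for compact self-adjoint operators, then verifying that the resulting series representation \eqref{Spectral} holds on all of $\Hh$.

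First I would establish that any Hilbert--Schmidt operator $\Op:\Hh \to \Hh$ is compact: picking an orthonormal basis $\{e_i\}$ and truncating at level $N$, the finite-rank operators $\Op_N h = \sum_{i\le N}(h,e_i)\Op e_i$ converge to $\Op$ in operator norm, since $\|\Op - \Op_N\|_{op}^2 \le \sum_{i>N}\|\Op e_i\|^2 \to 0$ by the Hilbert--Schmidt summability. Compactness plus symmetry (self-adjointness) is the key ingredient. Next, I would construct the leading eigenpair by the Rayleigh quotient: set
\begin{equation*}
\lambda_1 = \sup_{\|h\|=1} (\Op h, h),
\end{equation*}
which is finite because $\Op$ is bounded and nonnegative by positive-definiteness. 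To attain this supremum, take a maximizing sequence $h_n$; by reflexivity of $\Hh$ a subsequence converges weakly to some $\phi_1$, and compactness of $\Op$ upgrades this to norm convergence of $\Op h_n \to \Op \phi_1$. A short calculation shows $\|\phi_1\|=1$ and $\Op \phi_1 = \lambda_1 \phi_1$; this is the main technical step and also the main obstacle, since one must carefully use both weak compactness (from boundedness) and strong convergence on the image side (from compactness) to pass to the limit in the Rayleigh quotient.

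With the first eigenpair in hand, I would iterate the construction on the invariant subspace $\phi_1^\perp$, which is preserved by $\Op$ by symmetry. This yields a (possibly finite) sequence $(\phi_j, \lambda_j)$ with $\lambda_1 \ge \lambda_2 \ge \cdots \ge 0$. Compactness forces $\lambda_j \to 0$: otherwise the orthonormal sequence $\phi_j$ would have $\{\Op \phi_j\} = \{\lambda_j \phi_j\}$ containing no convergent subsequence, contradicting that $\Op$ maps bounded sets to precompact sets. Summability $\sum \lambda_j^2 < \infty$ follows from the Hilbert--Schmidt identity $\sum_j \|\Op \phi_j\|^2 \le \|\Op\|_{HS}^2$.

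Finally I would prove the series representation \eqref{Spectral}. Let $V = \overline{\mathrm{span}}\{\phi_j\}$. By construction $V$ is $\Op$-invariant, and its orthogonal complement $V^\perp$ is also $\Op$-invariant by symmetry. On $V^\perp$ the operator $\Op$ is still compact, symmetric, and nonnegative, so its supremal Rayleigh quotient would yield another eigenvalue, which the iterative construction has exhausted only if this supremum is zero; positive-definiteness together with the variational construction then forces $\Op \equiv 0$ on $V^\perp$. For any $h \in \Hh$, decomposing $h = \sum_j (h,\phi_j)\phi_j + h^\perp$ with $h^\perp \in V^\perp$ and applying $\Op$ termwise (using continuity and $\Op h^\perp = 0$) gives
\begin{equation*}
\Op h = \sum_{j}\lambda_j (h,\phi_j)\phi_j,
\end{equation*}
which is exactly \eqref{Spectral}. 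Extending $\{\phi_j\}$ to an orthonormal basis of $\Hh$ by adjoining any complete orthonormal system of $V^\perp$ completes the statement.
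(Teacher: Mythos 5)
The paper does not actually prove this statement: it is quoted as a classical result, with the proof deferred to the cited literature (Gohberg et al.; Horv\'ath--Kokoszka). Your proposal supplies the standard textbook proof of the Hilbert--Schmidt/spectral theorem for compact self-adjoint operators, and it is essentially correct: the finite-rank approximation argument for compactness, the Rayleigh-quotient construction of the leading eigenpair, the deflation onto $\phi_1^\perp$, the compactness argument forcing $\lambda_j \to 0$, the square-summability from the Hilbert--Schmidt norm, and the identification $\Op \equiv 0$ on $V^\perp$ are all the right ingredients. Two steps deserve to be written out rather than asserted. First, the degenerate case $\lambda_1 = 0$ must be dispatched separately (for a nonnegative symmetric operator this already forces $\Op = 0$, since $\|\Op\| = \sup_{\|h\|=1}(\Op h,h)$ for self-adjoint $\Op$), because otherwise the weak limit $\phi_1$ of the maximizing sequence could be $0$ and the normalization argument breaks. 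Second, the ``short calculation'' showing $\Op\phi_1 = \lambda_1\phi_1$ is the crux; the cleanest route is
\begin{equation*}
\left\Vert \Op\phi_1 - \lambda_1\phi_1\right\Vert^2 = \left\Vert \Op\phi_1\right\Vert^2 - 2\lambda_1(\Op\phi_1,\phi_1) + \lambda_1^2 = \left\Vert \Op\phi_1\right\Vert^2 - \lambda_1^2 \leq 0,
\end{equation*}
using $(\Op\phi_1,\phi_1)=\lambda_1$ and $\left\Vert \Op\right\Vert = \lambda_1$. Similarly, your closing claim that the exhausted Rayleigh quotient forces $\Op\equiv 0$ on $V^\perp$ should be justified via the same identity $\left\Vert \Op|_{V^\perp}\right\Vert = \sup_{h\in V^\perp,\,\|h\|=1}(\Op h,h)$, which is valid only because $\Op|_{V^\perp}$ is self-adjoint and nonnegative on the invariant subspace $V^\perp$. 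With those details filled in, your argument is a complete and correct proof of the quoted theorem.
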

\begin{remark}
The $\phi_j$ in \eqref{Spectral} can thus be assumed to form a basis, but some $\lambda_j$ may be zero.	
\end{remark}

From~\cite{horvath2012inference}. Theorem proved in \cite{gohberg1990classes}.
Denote by $\mathcal{L}$ the space of bounded (continuous) linear operators on $\Hh$ with the norm
$$
\|\Op\|_{\mathcal{L}}=\sup \{\|\Op(x)\|  \mid \|x\| \leq 1\} .
$$

\begin{definition}[Compact Operators]
An operator $\Op \in \mathcal{L}$ is said to be compact if there exist two orthonormal bases $\left\{g_j\right\}$ and $\left\{f_j\right\}$, and a real sequence $\left\{\lambda_j\right\}$ converging to zero, such that
\begin{equation}\label{Compact}\tag{Compact}
	\Op(h)=\sum_{j=1}^{\infty} \lambda_j (h,g_j)f_j , \quad h \in \Hh,
\end{equation}
The $\lambda_j$ may be assumed positive.
The existence of representation \eqref{Compact} is equivalent to the condition: $\Op$ maps every bounded set into a compact set. Compact operators are also called completely continuous operators. Representation \eqref{Compact} is called the singular value decomposition.	
\end{definition}

\begin{definition}[Hilbert-Schmidt Operators]
A compact operator admitting representation \eqref{Compact} is said to be a Hilbert-Schmidt operator if $\sum_{j=1}^{\infty} \lambda_j^2<\infty$.
The space $\mathcal{S}$ of Hilbert-Schmidt operators is a separable Hilbert space with the scalar product
\begin{equation}\label{eq1}
	\left\langle\Op_1, \Op_2\right\rangle_{\mathcal{S}}
	=\sum_{i=1}^{\infty}\left (\Op_1\left(f_i\right), \Op_2\left(f_i\right)\right),
\end{equation}
where $\left\{f_i\right\}$ is an arbitrary orthonormal basis. Note  the value of \eqref{eq1} is independent of the basis.  
The corresponding norm is 
\begin{equation}
	\label{HScond}\tag{HS}
	\|\Op\|_{\mathcal{S}}^2=\sum_{j \geq 1} \lambda_j^2
\end{equation}
\end{definition}

One can show that 
$$
\|\Op\|_{\mathcal{L}} \leq\|\Op\|_{\mathcal{S}}
$$

\begin{definition}
An operator $\Op \in \mathcal{L}$ is said to be symmetric if
$$
\langle\Op(f), g\rangle=\langle f, \Op(g)\rangle, \quad f,g \in \Hh,
$$
and positive-definite if
$$
\langle\Op(f), f\rangle \geq 0, \quad f \in \Hh .
$$
(An operator with the last property is sometimes called positive semidefinite, and the term positive-definite is used when the inequality is strict.)	
\end{definition}

%


\section{Data augmentation kernel perspective of non-contrastive SSL}
\label{sec:kernel_SSL_theory}

\begin{theorem}
Let $G(x)$ be the infinite Mercer features of the backward data augmentation covariance kernels, $k^{DAB}$. Let $F(x) = (f_1(x), f_2(x), \dots, f_k(x))$ be the features given by minimizing the following data augmentation invariance loss
\begin{align}
L(F) = 	\sum_{i=1}^{N_k} \|T_M f_i - f_i\|^2_\Lt ,\quad \text{subject to }\quad (f_i, f_j)_\Ltr = \delta_{ij}
\end{align}
which includes the orthogonality constraint. Then, $V(F) \subset V(G)$ ,  $V(F) \to V(G)$ as $N_k\to\infty$. 
\label{theorem:eigen_appendix}
\end{theorem}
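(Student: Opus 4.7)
The plan is to recognize the constrained minimization of $L(F)$ as a variational problem whose Euler-Lagrange conditions are governed by the singular value decomposition of the averaging operator $T_M$, and then identify its minimizers with the top eigenfunctions of $T_{k^{DAB}}=T_M^{\top}T_M$. Once that identification is made, Mercer's theorem immediately tells us that those eigenfunctions span $V(G)$, so the conclusion $V(F)\subset V(G)$ and the increasing limit $V(F)\to V(G)$ fall out.

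\textbf{Step 1: Adjoint and SVD.} First I would compute the adjoint $T_M^{\top}$ in $\Lt$ via Bayes' rule: writing $p(x_0\mid x)\rho(x)=p(x\mid x_0)\rho(x_0)$ shows that $T_M^{\top}g(x_0)=\mathbb{E}_{x\sim M(x_0)}[g(x)]$, and a direct expansion then verifies the two identities $T_M^{\top}T_M=T_{k^{DAB}}$ and $T_MT_M^{\top}=T_{k^{DAF}}$ that the paragraph after the theorem alludes to. Under a mild Hilbert--Schmidt assumption on $T_M$ (automatic when the augmentation density is bounded in $L^2$), the spectral theory of compact operators quoted in the appendix gives $T_M=\sum_j\sigma_j\,u_j\otimes v_j$ with $\{v_j\},\{u_j\}$ orthonormal in $\Lt$ and $\sigma_j^2=\lambda_j$ the eigenvalues of $T_{k^{DAB}}$ with eigenfunctions $v_j=\phi_j$.

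\textbf{Step 2: Reformulation as a trace problem.} Expanding and using $(f_i,f_j)_{\Lt}=\delta_{ij}$,
\begin{equation}
L(F)=\sum_{i=1}^{N_k}\bigl\langle (I-T_M)^{*}(I-T_M)\,f_i,\,f_i\bigr\rangle_{\Lt} = N_k-2\sum_i(T_Mf_i,f_i)_{\Lt}+\sum_i(T_{k^{DAB}}f_i,f_i)_{\Lt}.
\end{equation}
The key observation is that both the loss and the orthonormality constraint depend on $F$ only through the subspace $V(F)$: a change of orthonormal basis within $V(F)$ leaves $L$ and the constraints invariant. So I reduce the problem to minimizing $\mathrm{tr}\bigl(P_{V}(I-T_M)^{*}(I-T_M)\bigr)$ over $N_k$-dimensional subspaces $V\subset\Lt$, where $P_V$ is the orthogonal projection. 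By Ky Fan's minimum principle applied to the compact positive operator $A=(I-T_M)^{*}(I-T_M)=I-T_M-T_M^{\top}+T_{k^{DAB}}$, the infimum is attained on the span of the $N_k$ eigenfunctions of $A$ with smallest eigenvalues.

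\textbf{Step 3: From eigenfunctions of $A$ to eigenfunctions of $T_{k^{DAB}}$, and Mercer features.} The $v_j$'s diagonalize $T_{k^{DAB}}$, and on each $v_j$ the operator $T_M$ acts as $T_Mv_j=\sigma_ju_j$. I would exploit the singular-value structure to rewrite $A$ in a block-diagonal fashion on paired singular subspaces $\mathrm{span}\{v_j,u_j\}$, from which the eigenfunctions of $A$ attaining the smallest eigenvalues coincide with the right singular functions $v_j=\phi_j$ having the largest $\sigma_j$ (and $u_j$ close to $v_j$, which happens on any subspace where $T_M$ is near-invariant). This identifies the minimizing subspace as $V(F)=\mathrm{span}(\phi_1,\dots,\phi_{N_k})$. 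Since the Mercer features $G=(\sqrt{\lambda_j}\phi_j)_{j\geq 1}$ satisfy $V(G)=\overline{\mathrm{span}}\{\phi_j:\lambda_j>0\}$, the inclusion $V(F)\subset V(G)$ is immediate, and as $N_k\to\infty$ the finite spans increase monotonically to $V(G)$ in the $\Lt$ sense, giving the second claim.

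\textbf{Main obstacle.} I expect Step 3 to be the technical heart: because $T_M$ is not self-adjoint, the quadratic form $A$ mixes left and right singular vectors, and turning the optimization over $A$ into a clean ``top eigenfunctions of $T_M^{\top}T_M$'' statement requires carefully diagonalizing on paired singular subspaces (or passing through the polar decomposition of $T_M$). A secondary subtlety is the degeneracy between $V(F)$ and $V(G)$ when eigenvalues repeat; here $V(F)$ is only unique up to rotations inside an eigenspace, but this does not affect the subspace equality needed in Definition~\ref{defn.equivalent}.
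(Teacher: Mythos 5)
Your route is genuinely different from the paper's. The paper proves the key identity $T_{k^{DAB}}=T_M^{\top}T_M$ via a direct bilinear-form calculation, and then argues eigenfunction equivalence by writing the SVD of $T_M$, picking a ``right eigenvector $f_j$ of $T_M$'' with eigenvalue $\mu_j$, and computing the action of $T_{MM}=T_M^{\top}T_M$ and $T_{MS}=(T_M-I)^{\top}(T_M-I)$ on it, concluding the eigenvalue maps $\mu_j\mapsto\mu_j^2$ and $\mu_j\mapsto(\mu_j-1)^2$ respectively. It never explicitly characterizes the constrained minimizers of $L(F)$. You instead reformulate the constrained problem as $\min_{\dim V=N_k}\mathrm{tr}\bigl(P_V A\bigr)$ with $A=(I-T_M)^{*}(I-T_M)$ and invoke Ky Fan's extremal principle. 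That is a cleaner and more complete variational argument, and it makes explicit what the paper leaves implicit, namely that the optimal $F$ spans an invariant subspace of $A$.

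Two things to flag. First, your Step 1 adjoint calculation is incorrect: the identity $p(x_0\mid x)\rho(x)=p(x\mid x_0)\rho(x_0)$ is a detailed-balance (reversibility) condition, not Bayes' rule, and the paper explicitly notes it fails for non-invertible augmentations like crops ($p(x_0\mid z)\neq p(z\mid x_0)$). The correct adjoint is $T_M^{\top}g(x_0)=\rho(x_0)^{-1}\int p(x_0\mid x)\,g(x)\,\rho(x)\,dx$, which is \emph{not} $T_M g(x_0)$. Note also that if your detailed-balance identity held, $T_M$ would be self-adjoint, $A=(I-T_M)^2$, and your Step 3 obstacle would vanish entirely---so Step 1 and Step 3 as written are internally inconsistent.

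Second, the obstacle you name in Step 3 is real and is in fact a gap in the paper's own argument, not just in yours. The paper takes ``any right eigenvector $f_j$ of $T_M$'' and then treats it as an eigenvector of $T_M^{\top}$ too (writing $T_{MM}f_j=\mu_j^2 f_j$), which requires $f_j$ to be simultaneously a left eigenvector with the same eigenvalue---i.e., $T_M$ normal. For a general compact non-normal operator the right eigenvectors need not span the space, need not be orthogonal, and need not diagonalize $T_M^{\top}T_M$. Both your Ky Fan argument and the paper's eigenfunction-matching argument need this normality hypothesis (or a self-adjointness assumption on $T_M$, which would hold under the detailed-balance you mistakenly asserted). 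If you want a correct statement, either add such a hypothesis explicitly, or restrict the conclusion to the case where $T_M$ has a complete orthonormal system of eigenfunctions.

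Summary: your Ky Fan route is a legitimate and arguably tighter alternative to the paper's approach, and you correctly surface the non-self-adjointness issue that the paper glosses over; but fix the adjoint computation and reconcile it with Step 3, and make the normality assumption explicit.
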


The idea of the proof uses the fact that, as linear operators, $T_{k^{DAB}} = T_M^\top T_M$ and that $T_{k^{DAF}} = T_MT_M^\top$.  Then we use spectral theory of compact operators, which is analogue of the Singular Value Decomposition in Hilbert Space, to show that eigenfunctions of $T_M^\top T_M$ operator are the same as those obtained from optimizing $L(F)$.  A similar result can be obtained using $k^{DAF}$ and $T_M^\top$.


Note that $L(F)$ is the constrained optimization formulation of the BarlowTwins loss. 
Furthermore, $L(F)$ with the additional constraint that $(f_i,f_i) \geq \gamma$ $\forall i \in \{1, 2 \dots N_k\}$ is the constrained optimization formulation of the VICReg loss.

\subsection{Proof of theorem 3.1}
We show we can factor the linear operator, leading to a practical algorithm. Here, we show that we can capture the backward data augmentation kernel with the forward data augmentation averaging operator

\begin{lemma}
Using the definitions above, and with $k$ in \eqref{Tkdefn} given by  $k^{DAB}$, 
\[
T_k = T_M^\top T_M
\]
\end{lemma}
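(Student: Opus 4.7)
The plan is a direct kernel identification: I will express $T_M$, its $\Lt$-adjoint $T_M^\top$, and the composition $T_M^\top T_M$ as integral operators in $\Lt$, and then match the resulting kernel against the formula for $k^{DAB}$ from \eqref{kDA2}.

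First, I will read off from \eqref{TDAf} that $T_M$ is the integral operator with kernel $p(x_0\mid x)$, so $T_M f(x) = \int p(x_0\mid x)\, f(x_0)\, dx_0$. To compute the adjoint I will expand the $\Lt$-pairing
\[
(T_M f, g)_\Lt \;=\; \iint p(x_0\mid x)\, f(x_0)\, g(x)\, \rhox(x)\, dx_0\, dx,
\]
apply Fubini to swap the order of integration, and extract the factor $1/\rhox(x_0)$ needed to form the $\Lt$-inner product on the output side, arriving at
\[
T_M^\top g(x_0) \;=\; \frac{1}{\rhox(x_0)} \int p(x_0\mid x)\, g(x)\, \rhox(x)\, dx.
\]

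Next, I will substitute into $T_M^\top T_M f$ and apply Fubini once more to obtain
\[
(T_M^\top T_M f)(x_0) \;=\; \int f(y)\, \kappa(x_0, y)\, dy, \qquad \kappa(x_0, y) = \frac{1}{\rhox(x_0)} \int p(x_0\mid x)\, p(y\mid x)\, \rhox(x)\, dx.
\]
On the other hand, by \eqref{Tkdefn} applied to $k = k^{DAB}$, the target operator acts as $T_{k^{DAB}} f(x_0) = \int k^{DAB}(z, x_0)\, f(z)\, \rhox(z)\, dz$. The lemma therefore reduces to verifying $\kappa(x_0, y) = k^{DAB}(y, x_0)\, \rhox(y)$, which (after relabeling the dummy integration variable and using the symmetry of $k^{DAB}$ in its two arguments) is exactly the definition in \eqref{kDA2}.

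The main obstacle will be the bookkeeping in this final identification step. The variable $x_0$ carries different roles in the two expressions---an outcome of a conditional in the definition of $T_M$ versus an integrated inner variable in the definition of $k^{DAB}$---so matching the formulas requires careful renaming of dummy variables and, depending on the convention for the conditional $p(\cdot\mid\cdot)$, a Bayes-rule identity $p(x_0\mid x)\, \rhox(x) = p(x\mid x_0)\, \rhox(x_0)$ to swap the direction of the conditional. Beyond that, the argument is a clean Fubini-plus-adjoint computation, requiring only that $T_M$ is bounded on $\Lt$, which is the standing assumption underlying the compact-operator framework set up in the preceding appendix section.
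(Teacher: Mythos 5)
Your strategy is essentially the one the paper uses: both are adjoint computations via Fubini in $\Lt$. The paper phrases it through bilinear forms, defining $B^{VAR}(f,g)=(T_Mf,T_Mg)_\Ltr$ and $B^{DAB}(f,g)=(T_{k^{DAB}}f,g)_\Ltr$ and equating them, while you compute the kernel $\kappa$ of $T_M^\top T_M$ directly; these are the same argument, and your formulas for $T_M^\top$ and $\kappa$ are correct.

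Where you should be more careful is the final identification, which you defer to ``bookkeeping.'' You hedge that a Bayes swap $p(x_0\mid x)\rho(x)=p(x\mid x_0)\rho(x_0)$ is needed \emph{``depending on the convention''} -- in fact, if you read \eqref{kDA2} literally, it \emph{is} needed, and twice, not once. Your
\[
\kappa(x_0,y)=\frac{1}{\rho(x_0)}\int p(x_0\mid x)\,p(y\mid x)\,\rho(x)\,dx
\]
has both conditionals emanating \emph{from} the integrated variable $x$, whereas $k^{DAB}$ in \eqref{kDA2} has both conditionals pointing \emph{into} the integrated variable $x_0$. Without a detailed-balance assumption, $\kappa$ actually matches the forward kernel $k^{DAF}$ of \eqref{kDA} (times $\rho$), i.e.\ you get $T_M^\top T_M = T_{k^{DAF}}$, not $T_{k^{DAB}}$. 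The paper's own proof sidesteps this: when it expands $(T_{k^{DAB}}f,g)$ it substitutes $k^{DAB}(z,x)=\EE_{x_0}[p(x\mid x_0)p(z\mid x_0)]$, which is the forward-kernel formula rather than \eqref{kDA2} as written, after which Fubini alone closes the argument. So either the paper implicitly assumes detailed balance (under which $k^{DAF}$ and $k^{DAB}$ coincide, so the labels are interchangeable) or there is a DAF/DAB labeling slip in \eqref{kDA}--\eqref{kDA2}. Your plan surfaces a real dependency the paper leaves implicit: state the reversibility hypothesis explicitly rather than hedging, and carry out the two applications of Bayes (one for $p(x_0\mid x)$, one for $p(y\mid x)$) plus the symmetry of $k^{DAB}$ so the matching is actually verified rather than asserted.
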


\begin{proof}
First, define the non-negative definite bilinear form 
\begin{equation}	
\label{BilFormDefn}
B^{VAR}(f,g) 	= (\Op_M f, \Op_M g)_\Ltr  
\end{equation}
Given the backwards data augmentation covariance  kernel, $k^{DAB}$, define
\[
B^{DAB}(f,g) = (T_k f, g)_\Ltr 
\]
We claim, that
\begin{equation}
	B^{VAR} = B^{DA,B}
\end{equation}
This follows from the following calculation,
        \begin{align}
		B^{DA,B}(f,g) &= (T_k f, g)_\Ltr \\
		&= \EE_x [T_k f(x), g(x)] = \EE_x \EE_z [k_{DA,B}(z,x)f(z) g(x) ]\\
		& = \EE_x \EE_z \EE_{x_0} \left[ \frac{p(x_0 \mid x)}{\rho(x_0)} \frac{p(x_0 \mid z)}{\rho(x_0)}f(z) g(x) \right]\\
            &= \EE_{x_0} \left[ \sum_x \left(\frac{\rho(x)p(x_0 \mid x)}{\rho(x_0)} g(x)\right) \sum_z \left(\frac{\rho(z)p(x_0 \mid z)}{\rho(x_0)} f(z)\right) \right] \\
            &= \EE_{x_0} \left[ \sum_x \left(p(x \mid x_0) g(x) \right) \sum_z \left(p(z \mid x_0) f(z) \right)\right] \quad \quad \text{[Using Bayes' rule]} \\
            &= \EE_{x_0} \left[ \Op_M f(x_0) \Op_M g(x_0)\right] = (\Op_M f, \Op_M g)_\Ltr  = B^{VAR}(f,g) 
	\end{align}
\end{proof}

For implementations, it is more natural to consider \emph{invariance} to data augmentations. 
\begin{theorem}[equivalent eigenfunctions]
\label{thm.c3}
Assume that $\Op_M$ is a compact operator.  Define the  invariance bilinear form 
\begin{equation}\label{Binv}
	B^{INV}(f,g) = ( T_M f - f, T_M g - g)
\end{equation}
Then $B^{INV}$, $B^{VAR}$ share the same set of eigenfunctions.  Moreover, these are the same as the eigenfunctions of $B^{DA,B}$.
In particular, for any eigenfunction $f_j$ of $B^{VAR}$,   with eigenvalue $\lambda_j$, then $f_j$ is also and eigenfunction  of $B^{INV}$, with the corresponding eigenvalue given by $(\sqrt{\lambda_j} - 1)^2$.
\end{theorem}
\begin{proof}

Define $\Op_{MM}$ by,
\begin{equation}\label{AdjointFormulation}
\Op_{MM}f = \Op_M^\top  \Op_M f
\end{equation}
Define 
\begin{equation}
	\Op_{MS} = (T_M -  I)^\top(T_M-I)
\end{equation}
Note, by the assumption of compactness, $\Op_M$ has the Singular Value Decomposition, (see the Hilbert Space section for \eqref{SVD}), 
\begin{equation}\label{SVD}\tag{SVD}
	\Op_M(h)=\sum_{j=1}^{\infty} \lambda_j (h,g_j)f_j
\end{equation}
Let $f_j$ be any right eigenvector of $T_M$, with eigenvalue $\mu_j$.  Then $f_j$ is also a right eigenvector $T_M-I$, with eigenvalue $\mu_j - 1$.
So we see that $T_{MM}$ has $f_j$ as an eigenvector, with eigenvalue $\lambda_j = \mu_j^2$ and $T_{MS}$ has $f_j$ as an eigenvector, with eigenvalue $( \sqrt{\lambda_j} - 1)^2$. 
Finally, the fact that there are no other eigenfunctions also follows from \eqref{SVD}. 

The final part follows from the previous lemma. 
\end{proof}

\textbf{Equivalence of Barlow Twins loss to \cref{eq:ourloss}.}
The BarlowTwins loss from \citep{zbontar2021barlow} is as follows:
\begin{equation}
    \mathcal{L}_{BT} = \sum_i (C_{ii}-1)^2 + \beta \sum_i\sum_{j\neq i} C_{ij}^2
\end{equation}
where $C$ is the cross-correlation matrix computed between the outputs of the network to two different augmentations. First, the BarlowTwins loss can be seen as the unconstrained optimization form of the following constrained optimization objective:
\begin{equation}
    \mathcal{L}_{BT} = \sum_i (C_{ii}-1)^2 \quad \text{, subject to } \quad C_{ij} = 0 \quad \forall j \neq i
\end{equation}
where $\beta$ is the Lagrangian multiplier \citep{boyd2004convex}. In \citep{zbontar2021barlow}, the cross-correlation matrix $C$ is computed by a dot product between normalized functions $f_i$'s such that $(f_i, f_i)_\Ltr = 1$  $\forall i$. The network output for one augmentation of $x$, $a$, can be thought of as a Monte-Carlo estimate (with one sample) of $T_M f_i(x)$, where $f_i$ is the $i^{th}$ dimension of the network's output. 
Therefore, the BarlowTwins loss can be written in its following equivalent form:
\begin{equation}\label{eq:BTloss}
\hat{L}^{BT}(F) = \sum_{i=1}^{N_k} 
\left ( (T_M f_i,T_M f_i)_\Ltr -1\right)^2 \quad \text{,  subject to }\quad (f_i, f_j)_\Ltr = \delta_{ij} 
\end{equation}
As shown by \citep{zhai2023understanding}, the eigenvalues of $T_M^TT_M$ are always less than 1. Therefore, we do not need the square in \Cref{eq:BTloss}. Rewriting it, we get the following:
\begin{equation}\label{eq:BTloss2}
\hat{L}^{BT}(F) = \sum_{i=1}^{N_k} 
 (T_M f_i,T_M f_i)_\Ltr \quad \text{,  subject to }\quad (f_i, f_j)_\Ltr = \delta_{ij} 
\end{equation}
Using \cref{thm.c3}, we show that the loss recovers the equivalent eigenfunctions for the following reason. We can rewrite the loss as
\begin{align}
\hat{L}^{BT}(F) &= \sum_{i=1}^{N_k} 
((T_M-I) f_i,(T_M-I) f_i)_\Ltr  \quad \text{,  subject to }\quad (f_i, f_j)_\Ltr = \delta_{ij} \nonumber\\
\implies \hat{L}^{BT}(F) &= \sum_{i=1}^{N_k} \|T_M f_i - f_i\|^2_\Lt  \quad \text{,  subject to }\quad (f_i, f_j)_\Ltr = \delta_{ij} \nonumber\\
\end{align}
which recovers the loss \cref{eq:ourloss}.
Note that the VICReg loss \citep{bardes2021vicreg}, in addition to the constraints imposed by the BarlowTwins loss, ensures that the norm of $f_i$'s are more than some threshold. This can be easily incorporated into the constraint with a constant along with $\delta_{ij}$. In conclusion, both BarlowTwins and VICReg losses can be seen as equivalent forms of the loss \cref{eq:ourloss}.

\begin{theorem}
    (Informal) Let us denote the span of the feature space at initialization as $V(F_0)$ and after training as $V(F_T)$.
    For small initialization of the network's weights, the alignment of $V(F_T)$ with the eigenfunctions of $\Tau$ depend on two factors: (i) alignment of $V(F_0)$ with the eigenfunctions of $\Tau$; (ii) singular values of $\Tau$.
    \label{theorem:dynamics_appendix}

    \textbf{Theorem B.4. (Formal)} Let $\Gamma = V\Lambda V^T$ represent the eigendecomposition of $\Gamma$, and define $z$ as the projection of the weight vectors in $W$ onto singular vectors of $\Gamma$, V. Formally, $z = WV$. Assuming small initialization (as in Simon et al. (2023), i.e. $| z_{pi}(0) | << 1$ for all $p,i$, we can derive the following conclusions:

    \begin{enumerate}
        \item $sign(\frac{\Delta z_{pi}(t)}{z_{pi}(t)}) = sign(\lambda_i)$
        \item For all $\lambda_i, \lambda_j > 0$, $\frac{z_{pi}(t)}{z_{pi}(0)} = (\frac{z_{pj}(t)}{z_{pj}(0)})^{\frac{\lambda_i}{\lambda_j}}$ where $\lambda_i$ denotes the $i^{th}$ singular value, i.e. $i^{th}$ element of diagonal matrix $\Lambda$.
    \end{enumerate}
\end{theorem}
\begin{proof}
    We will first show that the above holds for a linear network, i.e. the output of the network with weights $W \in \mathbf{R}^{m\times n}$ is $WX$ for some input $X \in \mathbf{R}^{n \times b}$, where $m$ is the output dimensionality, $n$ is the input dimensionality and $b$ is the batch size. \\
    Let us first analytically compute the cross-correlation matrix $C$ following \citep{simon2023stepwise}.
    \begin{align}
        C = WXX'^TW^T &= W\Tau W^T \nonumber \\
        C_{pq} = \sum_{i,j} W_{pi}\Tau_{ij}W_{qj} \quad,&\quad C_{pp} = \sum_{i,j} W_{pi}\Tau_{ij}W_{pj} \nonumber
    \end{align}
    where $X$ and $X'$ are matrices $\in \mathbf{R}^{n \times b}$ containing two augmentations of a each image in a batch of images. Also, we have defined $\Tau = XX'^T$, i.e. the augmentation-defined data correlation matrix.\\
    Rewriting the BarlowTwins loss function from \citep{zbontar2021barlow}:
    \begin{equation}
        \mathcal{L}_{BT} = \sum_i (C_{ii}-1)^2 + \beta \sum_i\sum_{j\neq i} C_{ij}^2 \nonumber
    \end{equation}
    To study the learning dynamics, we need to compute the gradient of $\mathcal{L}_{BT}$ w.r.t. the parameters $W$. 
    \begin{equation}
        \label{eq:loss_bt_grad}
        \frac{d W_{pq}}{dt} = -\eta \frac{\partial \mathcal{L}_{BT}}{\partial W_{pq}} = -2\eta \sum_i (C_{ii}-1)\frac{\partial C_{ii}}{\partial W_{pq}} - 2 \eta \beta \sum_i \sum_{j\neq i} C_{ij}\frac{\partial C_{ij}}{\partial W_{pq}}
    \end{equation}
    Let us now analytically compute the derivatives of $C_{ii}$ and $C_{ij}$ w.r.t $W_{pq}$ to simplify each of the terms in \Cref{eq:loss_bt_grad}.
    \begin{align}
        \frac{\partial C_{ii}}{\partial W_{pq}} &= \frac{\partial}{\partial W_{pq}} \sum_{j,k} W_{ij}\Tau_{jk}W_{ik} = \frac{\partial}{\partial W_{pq}} \sum_{j,k} W_{ij}\Tau_{jk}W_{ik} \delta_{pi} \nonumber\\
        &= \left( \sum_{j,k} \Tau_{jk} W_{pk}\delta_{jq} + \sum_{j,k} W_{pj}\Tau_{jk} \delta_{kq} \right) \delta_{pi} \nonumber\\
        &= \left( \sum_k \Tau_{qk}W_{pk} + \sum_j W_{pj}\Tau_{jq} \right) \delta_{pi} \nonumber\\
        &= 2 \left[W\Tau \right]_{pq} \delta_{pi} \nonumber\\
        \implies \sum_i (C_{ii}-1)\frac{\partial C_{ii}}{\partial W_{pq}} &= 2(C_{pp}-1)\left[W\Tau \right]_{pq}
        \label{eq:loss_bt_grad_term1}
    \end{align}
    Using similar algebra steps, we can simplify the second term:
    \begin{align}
        \frac{\partial C_{ii}}{\partial W_{pq}} &= \left[W\Tau \right]_{jq} \delta_{pi} + \left[W\Tau \right]_{iq} \delta_{pj} \nonumber\\
        \implies \sum_i \sum_{j\neq i} C_{ij}\frac{\partial C_{ii}}{\partial W_{pq}} &= \sum_i \sum_{j\neq i} C_{ij} \left( \left[W\Tau \right]_{jq} \delta_{pi} + \left[W\Tau \right]_{iq} \delta_{pj} \right) \nonumber\\
        &= \sum_{j \neq q} C_{pj}\left[W\Tau \right]_{jq} + \sum_{i \neq q} C_{ip}\left[W\Tau \right]_{iq} \nonumber\\
        &= 2\left[(C-I)W\Tau \right]_{pq} - 2(C_{pp} -1)\left[W\Tau \right]_{pq}
        \label{eq:loss_bt_grad_term2}
    \end{align}
    Substituting \Cref{eq:loss_bt_grad_term1,eq:loss_bt_grad_term2} into \Cref{eq:loss_bt_grad}, we get:
    \begin{align}
        \frac{d W_{pq}}{dt} &= -\eta \frac{\partial \mathcal{L}_{BT}}{\partial W_{pq}} = -4 \eta (C_{pp} - 1)\left[W\Tau \right]_{pq} - 4\eta\beta\left[(C-I)W\Tau \right]_{pq} + 4\eta\beta (C_{pp} -1)\left[W\Tau \right]_{pq} \nonumber\\
        &= -4\eta (1-\beta)(C_{pp} - 1)\left[W\Tau \right]_{pq} - 4\eta\beta \left[(C-I)W\Tau \right]_{pq}
        \label{eq:loss_bt_grad_expr2}
    \end{align}
    Note that setting $\beta = 1$ yields the dynamics equation presented by \citep{simon2023stepwise}. However, in practice, $\beta$ is orders of magnitude less that 1. For sake of simplicity, we will analyze the extreme case of $\beta=0$, which will yield us insights into the weak-orthogonality constraint case. Therefore,
    \begin{equation}
        \frac{d W_{pq}}{dt} \approx -4 \eta (C_{pp} - 1)\left[W\Tau \right]_{pq}
        \label{eq:loss_bt_grad_approx}
    \end{equation}
    Let us denote the eigendecomposition of $\Tau$ be written as $\Tau = V\Lambda V^T$. Here, $\Lambda$ is a diagonal matrix with singular values as the diagonal elements. Let us also denote the projection of the weight vectors onto the singular vectors of $\Tau$, i.e. $V$ as $z$. So, $z = WV$.\\
    Therefore, using these definitions, we can write the following:
    \begin{align}
        C_{pp} &= \left[W\Tau W^T\right]_{pp} = \left[Z\Lambda Z^T\right]_{pp} = \sum_i z_{pi}^2\lambda_i \nonumber\\
        W\Tau &= WV\Lambda V^T = Z\Lambda V^T \nonumber
    \end{align}
    Now, writing the update equations \Cref{eq:loss_bt_grad_approx} in terms of $z_{pi}$:
    \begin{align}
        \frac{d z_{pi}}{dt} &= \sum_q \frac{d W_{pq}}{dt} V_{qi} \nonumber\\
        &= -4 \eta \left(\sum_j z_{pj}^2\lambda_j - 1 \right) \sum_k z_{pk}\lambda_k (\sum_q V_{qk} V_{qi}) \nonumber\\
        &= -4 \eta \left(\sum_j z_{pj}^2\lambda_j - 1\right)z_{pi}\lambda_i
        \label{eq:loss_bt_grad_approx_expr2}
    \end{align}
    Assuming small initialization of weights $W$, we can assume that $\mid z_{pi}(0)\mid << 1$, i.e. magnitude $z_{pi}$ at time 0 is very small.\\
    Let us define $h_p(t) = 1 - \sum_j z_{pj}(t)^2\lambda_j$. For small initialization, $h_p(t) > 0$  $\forall t$. Therefore,
    \begin{equation}
        sign\left(\frac{d z_{pi}(t)}{dt} \frac{1}{z_{pi}}\right) = sign(\lambda_i)
        \label{eq:loss_bt_grad_sign}
    \end{equation}
    It is clear from \Cref{eq:loss_bt_grad_sign} that if $\lambda_i < 0$, $\lim_{t \to \infty}z_{pi}(t) = 0$. Similarly, if $\lambda_i = 0$, then $z_{pi}(t) = z_{pi}(0)$ $\forall t$.\\
    Therefore, akin to the conclusions of \citep{simon2023stepwise}, the BarlowTwins loss recovers directions corresponding to positive singular values in the augmentation-defined covariance matrix, $\Tau$ and suppresses directions corresponding to negative singular values. Thus, the network outputs span the top singular vectors of $\Tau$.\\
    It is worth noting from \Cref{eq:loss_bt_grad_approx_expr2} that the following holds:
    \begin{align}
        \frac{1}{\lambda_i} \frac{d log(z_{pi})}{dt} &= \frac{1}{\lambda_j} \frac{d log(z_{pj})}{dt} \nonumber \\
        \implies \frac{1}{\lambda_i} log\left(\frac{z_{pi}(t)}{z_{pi}(0)}\right) &= \frac{1}{\lambda_j} log\left(\frac{z_{pj}(t)}{z_{pj}(0)}\right) \nonumber \\
        \implies \frac{z_{pi}(t)}{z_{pi}(0)} &= \left(\frac{z_{pj}(t)}{z_{pj}(0)} \right)^{\frac{\lambda_i}{\lambda_j}}
    \end{align}
\end{proof}
Without loss of generality, if $\lambda_i << \lambda_j$, then $z_{pi}(t) \approx z_{pi}(0)$. Therefore, under small initialization, i.e. $z_{pi}(0)$ is small $\forall i$, gradient descent biases the $p^{th}$ weight vector to be more strongly aligned to the eigenvector corresponding to the strongest eigenvalue, for all $p$'s. Hence, under weak orthogonalization constraints, the BarlowTwins loss will over "represent" the strong singular vectors of the augmentation-defined cross-correlation matrix.\\
When using high-dimensional projectors, specifically when $m >> \sum_i \mathbf{1}_{\lambda_i > 0}$, wherein $\mathbf{1}_{\zeta}$ is the indicator function that is 1 when condition $\zeta$ is true and 0 otherwise, this problem might be ameliorated because there are multiple weight vectors that might be aligned with the top singular vectors of $\Tau$ at initialization. However, when using low-dimensional projectors, we do not have such a luxury and therefore, using a weak orthogonalization constraint leads to dimensionality collapse in the representation space.

\textbf{Extending to deep non-linear networks.} Similar to the analysis in \citep{simon2023stepwise}, we can repeat the above analysis by replace $X$ and $X'$ by the corresponding kernel versions, where the kernel corresponds to the Neural Tangent Kernel (NTK) of the network. Therefore, the implicit bias of gradient descent to yield dimensionality collapse in the representation space when using weak orthogonalization constraints still remains.

\textbf{Dimensionality collapse under noisy optimization.} From the rest of this section, we have seen that the BarlowTwins loss is a Monte-Carlo estimate of the true data-augmentation defined covariance matrix. Moreover, stochastic gradient descent adds noise due to mini-batch sampling to the optimization process. Note that there exist symmetries in our linear network, i.e. an orthogonal rotation of the weight matrix yields the same loss function. As explained in \citep{chen2023stochastic}, such symmetry-invariant sets are potential candidates for stochastic collapse when performing noisy gradient-based optimization. Therefore, the presence of noise in the data-augmentation covariance matrix, $\Tau$, as well as the batch noise can further worsen the dimensionality collapse problem where different weight vectors become parallel to each other due to noise in updates. One possible mitigation strategy is to obtain a better estimate of the true augmentation-defined covariance matrix (see \Cref{fig:Ansatz}), which we discuss in the next section.

\textbf{Empirical validation.} We empirically validate our results on the learning dynamics on simplistic 2-dimensional settings. These results, demonstrating the difference in feature learning dynamics for weak vs strong orthogonalization, are presented as GIFs in the supplementary material, and can also be viewed at the \href{https://sites.google.com/view/harnessing-small-projectors}{project website}.



\section{Multi-Augmentation Learning}
\label{sec:multi_augs_theory}

\subsection{Augmentation graph}
We use the population augmentation graph formulation introduced in \cite{haochen2021provable}. Briefly, we define a graph $\mathcal{G(X,W)}$, where the vertex set $\mathcal{X}$ comprises of all augmentations from the dataset (could be infinite when continuous augmentation functions are used) and $\mathcal{W}$ denotes the adjacency matrix with edge weights as defined below:
\begin{equation}
    w_{xx'} := \mathbb{E}_{\bar{x} \sim \mathcal{P}_{\bar{X}}} \left[ \mathcal{A}(x|\bar{x})\mathcal{A}(x'|\bar{x}) \right]
\end{equation}
, i.e. the joint probability of generating `patches' $x,x'$ from the same image $\bar{x}$. Here $\mathcal{A}$ defines the set of augmentation functions used in the SSL pipeline. It is worth noting that the magnitude of $w_{xx'}$ captures the relative similarity between $x$ and $x'$. A higher value of $w_{xx'}$ indicates that it is more likely that both patches came from the same image, and thereby are more similar. 
The marginal likelihood of each patch $x$ can also be derived from this formulation:
\begin{equation}
    w_x = \mathbb{E}_{x' \sim \mathcal{X}} \left[w_{xx'} \right]
\end{equation}

\begin{figure}[t]
    \centering
    \includegraphics[width=0.9\textwidth]{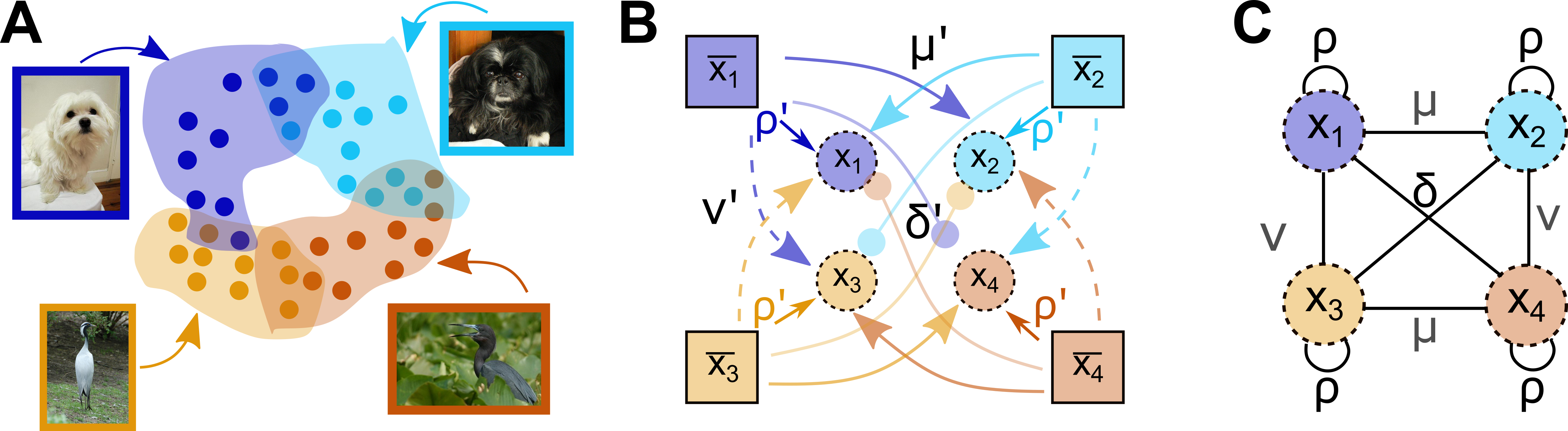}
    \caption{Schematic of augmentation graph. (A) Augmentations from each image span a region in the image space which could overlap with the augmentation span of other images. (B) An augmentation graph schematic that uses probabilities to characterize the interactions among augmentation spans of different instances.}
    \label{fig:aug_graph}
\end{figure}

\subsection{Contrastive and non-contrastive losses suffer from the same issues}
We will now show that the proposal of using multiple patches for the $\mathcal{L}_{invariance}$ is pertinent to both the contrastive and non-contrastive SSL.
Following \citep{haochen2021provable}, we use the spectral contrastive loss formulation and incorporate the augmentation graph relations:
\begin{align}
    \mathcal{L}_c &= - \mathbb{E}_{x,x^+} \left[f(x)^T f(x^+) \right] + \beta \mathbb{E}_{x,x'} \left[ \left(f(x)^T f(x') \right)^2 \right] \nonumber \\
    \mathcal{L}_c &\propto \|ZZ^T - D^{-\frac{1}{2}} \mathcal{W} D^{-\frac{1}{2}} \|^2_F = \|ZZ^T - \bar{\mathcal{W}} \|^2_F 
    \label{eq:L_con}
\end{align}
where $z := \sqrt{w_x}f(x)$, $D$ is a $N \times N$ diagonal matrix with entries $\{w_x\}$ and $\bar{\mathcal{W}} = D^{-\frac{1}{2}} \mathcal{W} D^{-\frac{1}{2}}$.

We extend the duality results between contrastive and non-contrastive SSL loss, established by \citep{garrido2022duality}, to demonstrate how \cref{eq:L_con} can be decomposed into the invariance and collapse-preventing loss terms. 
\begin{align}
    \|ZZ^T - \bar{\mathcal{W}} \|^2_F &= \|Z^TZ - I_d\|^2_F + 2Tr\left[ Z^T(I_N-\bar{\mathcal{W}})Z\right] + \kappa \label{eq:L_noncon1} \\
    &= \|Z^TZ - I_d\|^2_F + 2\sum_i \sum_x (1-\bar{w}_x) z_i^2 - 2\sum_i \sum_{x,x'} \bar{w}_{xx'} z_i z_i' + \kappa
    \label{eq:L_noncon2}
\end{align}
where $\kappa$ is some constant independent of $Z$. The first term in \cref{eq:L_noncon1} is the covariance regularization term in non-contrastive losses like BarlowTwins (implicit) or VIC-Reg (explicit), and the second term in \cref{eq:L_noncon2} is the variance regularization.
Simplifying the third term in \cref{eq:L_noncon2} gives us:
\begin{align}
    \sum_i \sum_{x,x'} \bar{w}_{xx'} z_i z_i' &= \sum_i \sum_{x,x'} w_{xx'} f(x)_i f(x')_i = \sum_i \sum_{x,x'} \mathbb{E}_{\bar{x} \sim \mathcal{P}_{\bar{X}}} \left[ \mathcal{A}(x|\bar{x})\mathcal{A}(x'|\bar{x}) f(x)_i f(x')_i\right] \nonumber \\
    &= \sum_i \mathbb{E}_{\bar{x} \sim \mathcal{P}_{\bar{X}}} \left[\sum_{x}\mathcal{A}(x|\bar{x}) (f(x)_i \overline{f(x)}_i  - f(x)_i^2)\right] \nonumber \\
    &= \mathbb{E}_{\bar{x} \sim \mathcal{P}_{\bar{X}}} \left[\sum_{x}\mathcal{A}(x|\bar{x}) \left(f(x)^T\overline{f(x)} - \|f(x)\|^2\right)\right]
    \label{eq:L_noncon3}
\end{align}
This term encourages $f(x)$ to be similar to $\overline{f(x)}$, i.e. the mean representation across all augmentations of $\bar{x}$, thereby requiring to ``sufficiently'' sample $A(.|\bar{x})$. Given that both the contrastive and non-contrastive losses rely on learning invariance properties from data augmentations, we believe that our multi-patch proposal would improve the probability density estimation of $A(.|\bar{x})$ and yield better performance with few training epochs.

\subsection{Explaining training dynamics in low patch sampling regime}
We now turn to a simple form of the augmentation graph to understand how using low number of augmentations affects the evolution of $ZZ^T$. Minimizing \cref{eq:L_con} implies that the spectral decomposition of $Z$ would align with the top eigenvectors (and values) of $\overline{\mathcal{W}}$. We will demonstrate that in the low sampling regime (using few augmentations), the eigenvectors of the sampled augmentation graph $\Tilde{\mathcal{W}}$ \textit{may not} align with those of $\overline{\mathcal{W}}$.

\textbf{Augmentation graph setup.} We define an augmentation graph with only two instances from two different classes, similar to the one presented in \citep{shen2022connect}. Let us denote the four instances as $\bar{x}_i$ for $i \in {1,2,3,4}$, where $\bar{x}_1,\bar{x}_2$ belong to class 1 (i.e. $y_1,y_2 = 1$) and $\bar{x}_3,\bar{x}_4$ belong to class 2 (i.e. $y_3,y_4 = 4$). Let us further assume that $\bar{x}_1,\bar{x}_3$ have the highest pixel-level similarity among $(\bar{x}_1,\bar{x}_i) \forall i \in {2,3,4}$, thereby making it more likely to have similar patches. We denote this relationship among input examples using $\mathcal{G}$ to indicate (pixel-wise) global similarity groups. So, $\mathcal{G}_1,\mathcal{G}_3 = 1$ and $\mathcal{G}_2,\mathcal{G}_4=2$. We can use the following probabilistic formulation to model our augmentation functions (see \Cref{fig:aug_graph}B):

\begin{align}
    A(x_j | \bar{x}_i) =
    \begin{cases}
        \rho' & \text{if $j=i$} \\
        \mu' & \text{if $j\neq i$ and $y_j = y_i$ and $\mathcal{G}_j \neq \mathcal{G}_i$} \\
        \nu' & \text{if $j\neq i$ and $y_j \neq y_i$ and $\mathcal{G}_j = \mathcal{G}_i$} \\
        \delta' & \text{if $j\neq i$ and $y_j \neq y_i$ and $\mathcal{G}_j \neq \mathcal{G}_i$}
    \end{cases}
\end{align}
In our setting, $\rho'+\mu'+\nu'+\delta' = 1$. 
The adjacency matrix of our augmentation graph (as shown in \Cref{fig:aug_graph}C) is as follows:
\begin{align}
    \overline{\mathcal{W}} =
    \begin{bmatrix}
        \rho & \mu & \nu & \delta \\
        \mu & \rho & \delta & \nu \\
        \nu & \delta & \rho & \mu \\
        \delta & \nu & \mu & \rho \\
    \end{bmatrix}
\end{align}
We defer the relations between $\rho',\mu',\nu'\delta'$ and $\rho,\mu,\nu,\delta$ to the appendix. The eigenvalues of this matrix are: ($\rho + \mu + \nu + \delta$, $\rho + \mu - \nu - \delta$,$\rho - \mu + \nu - \delta$, $\rho -\mu - \nu + \delta$). Corresponding eigenvectors are along $\left[1,1,1,1 \right]^T$, $\left[1,1,-1,-1 \right]^T$. $\left[1,-1,1,-1 \right]^T$, $\left[1,-1,-1,1 \right]^T$. 
Assuming that the augmentation functions induce semantically-relevant invariance properties that are relevant for identifying $y_i$ from $f(x_i)$, we can say that $\rho' > max\{\mu',\nu'\}$ and $min\{\nu',\mu'\} > \delta'$. When we have sufficiently sampled the augmentations, any SSL loss will learn $Z$ such that its singular values are span the top eigenvectors of the augmentation graph, and the eigenspectrum of $ZZ^T$ would simply be the above eigenvalues. 
In practical settings, the augmentation graph would have significantly higher dimension that the feature/embedding dimension \footnote{Contrastive algorithms use a large batch size, thereby optimizing a high-dimensional $ZZ^T$ whereas non-contrastive algorithms use a large embedding dimension, thereby optimizing a high-dimensional $Z^TZ$.}. Therefore, singular vectors of $Z$ would span the top eigenvectors of $\overline{\mathcal{W}}$ and the smaller eigenmodes are not learned. 
When we have accurately sampled the augmentation graph, $\mu > \nu$ and therefore, the class-information preserving information is preferred over pixel-level preserving information during learning. But what happens when we \textit{do not sufficiently sample the augmentation space?}

\begin{figure}[t]
    \centering
    \includegraphics[width=\textwidth]{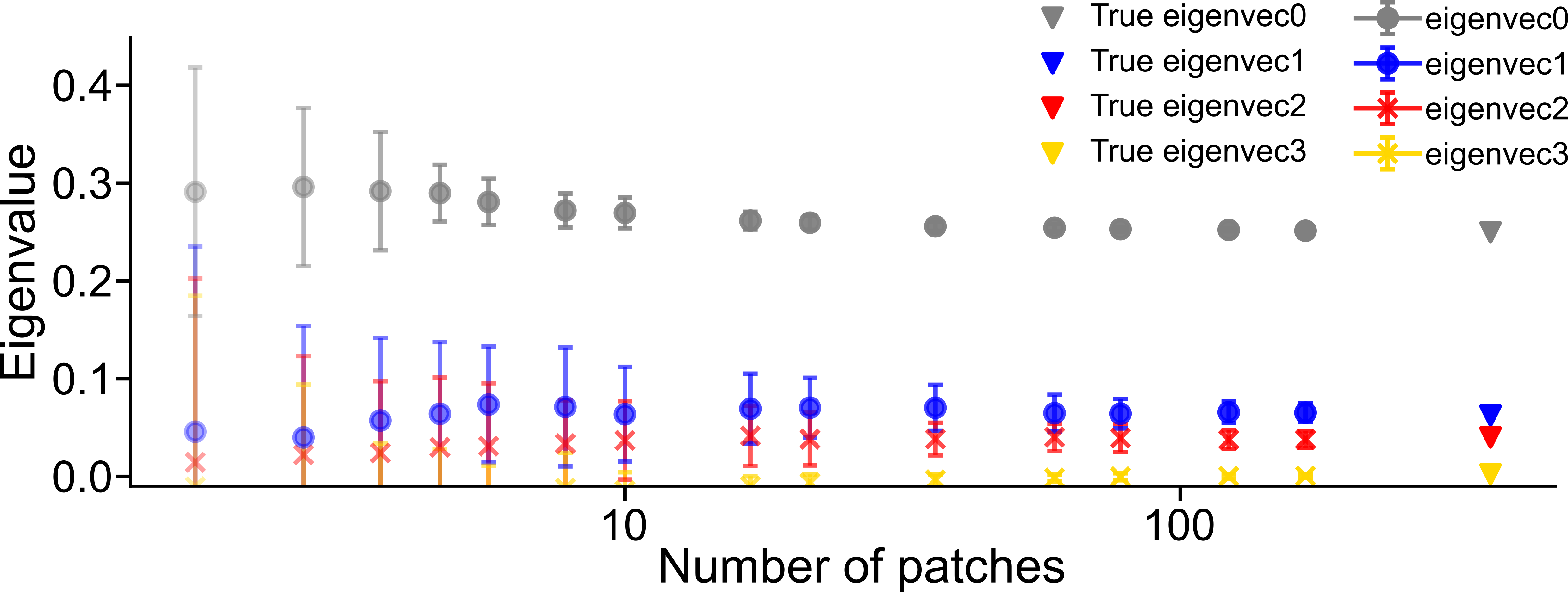}
    \caption{Empirical verification of the subsampling Ansatz.}
    \label{fig:Ansatz}
\end{figure}

\textbf{Ansatz.} Based on our empirical experience, we define \textit{an ansatz} pertaining to the eigenvalues of a sampled augmentation graph and validate it in tractable toy settings, such as the one described above. Specifically, we claim that when the augmentation space is not sufficiently sampled, $\{|\mu-\nu|,\delta\} \to 0$. In other words, we claim that when only few augmentations per example are used, it is more likely to have an equal empirical likelihood for augmentations that preserve (pixel-level) global information and class/context information. Moreover, it is very unlikely to have augmentations that change both the class and global information. This is demonstrated in \Cref{fig:Ansatz}.

\textbf{Consequences of the \textit{Ansatz}.} When only a few augmentations are sampled, learning can suppress the class information at the cost of preserving the pixel-level information, thereby leading to an increased smoothness in the learned feature space.

\section{Implementation Details}
\label{sec:implementation}
\textbf{Image Classification Datasets} Across all experiments, our settings mainly follow \cite{chen2022intra}. In particular, \Cref{tab:pretrain} summarizes our pretraining settings on Cifar-10 \cite{cifar10}, STL-10 \cite{coates2011analysis} and Imagenet-100 \citep{russakovsky2015imagenet}. The Imagenet-100 dataset was generated by sampling 100 classes from the original Imagenet-1k dataset, according to this \href{https://github.com/danielchyeh/ImageNet-100-Pytorch/blob/main/IN100.txt}{list} \cite{tian2020contrastive}. In \Cref{tab:finetune}, we outline the corresponding linear evaluation settings for Resnet-50 (for CIFAR-10 and STL-10) and ResNet-18 (for Imagenet). Note that we add a linear classifier layer to the encoder’s features and discard the projection layers for evaluation. Our code base is publicly available on \href{https://github.com/kumarkrishna/fastssl}{github}.

\begin{table}[ht]
  \centering
  \begin{minipage}[b]{0.5\linewidth} 
    \centering
    \begin{tabular}{l|c} 
      config & value \\
      \hline
      optimizer & Adam  \\
      learning rate  & 1e-3   \\
      batch size   & 128 (Imagnet), 256 (CIFAR, STL)   \\
      epochs  &  100 \\
      weight-decay & 1e-6   \\
    \end{tabular}
    \subcaption{Pretraining}
    \label{tab:pretrain}
  \end{minipage}
  \hfill 
  \begin{minipage}[b]{0.45\linewidth} 
    \centering
    \begin{tabular}{l|c} 
      config & value \\
      \hline
      optimizer & Adam \\
      learning rate  & 1e-3   \\
      batch size   & 512   \\
      epochs  &  200  \\
      weight-decay & 1e-6   \\
      test-patches & 16 \\
    \end{tabular}
    \subcaption{Linear Evaluation}
    \label{tab:finetune}
  \end{minipage}
  \caption{Experiment Protocol for comparing SSL algorithms}
\end{table}

The key SSL loss functions that we use in this work are BarlowTwins \citep{zbontar2021barlow} and VICReg \citep{bardes2021vicreg}. Let us suppose that the embeddings of two augmentations of a batch of images are denoted as $z$ and $z'$. The BarlowTwins loss function is as follows:
\begin{align}
    \label{eq:BT_implementation}
    \mathcal{L}_{BT} &= \sum_i (C_{ii}-1)^2 + \beta \sum_i\sum_{j\neq i} C_{ij}^2 \\
    \text{where }\quad C &= \frac{1}{n-1} \sum_{k=1}^{n} (z_k - \bar{z})(z_k' - \bar{z'})^T \nonumber \\
    \text{and }\quad \bar{z} &= \frac{1}{n} \sum_{k=1}^{n} z_k \quad \text{,} \quad \bar{z'} = \frac{1}{n} \sum_{k=1}^{n} z_k'
\end{align}
$C_{ij}$ is the element of $C$ at row $i$, column $j$ and $n$ is the batch size. For each projector dimensionality, $d$, we search for the hyperparameter, $\beta$, that yields the best downstream task performance.

The VICReg loss function is as follows:
\begin{align}
    \label{eq:VICReg_implementation}
    \mathcal{L}_{VIC} &= \frac{1}{n} \mu \sum_{k=1}^n \|z_k - z_k' \|^2 +   \frac{1}{2} \mu \left[v(Z) + v(Z') \right] + \frac{1}{2} \left[c(Z) + c(Z') \right]\\
    \text{where }\quad v(Z) &= \frac{1}{d} \sum_{i=1}^{d} max(0, 1 - Stdev(z_{:,i})) \nonumber \\
    \text{and }\quad c(Z) &= \frac{1}{d} \sum_i \sum_{j\neq i}\left[C(Z)_{ij}\right]^2 \quad \text{,} \quad C(Z) = \frac{1}{n-1} \sum_{k=1}^{n} (z_k - \bar{z})(z_k - \bar{z})^T \nonumber 
\end{align}
For each projector dimensionality, $d$, we search for the hyperparameter, $\mu$, that yields the best downstream task performance.

\subsection{Empirical results for low-dimensional projectors}

\begin{figure}[!ht]
    \centering
    \includegraphics[width=\linewidth]{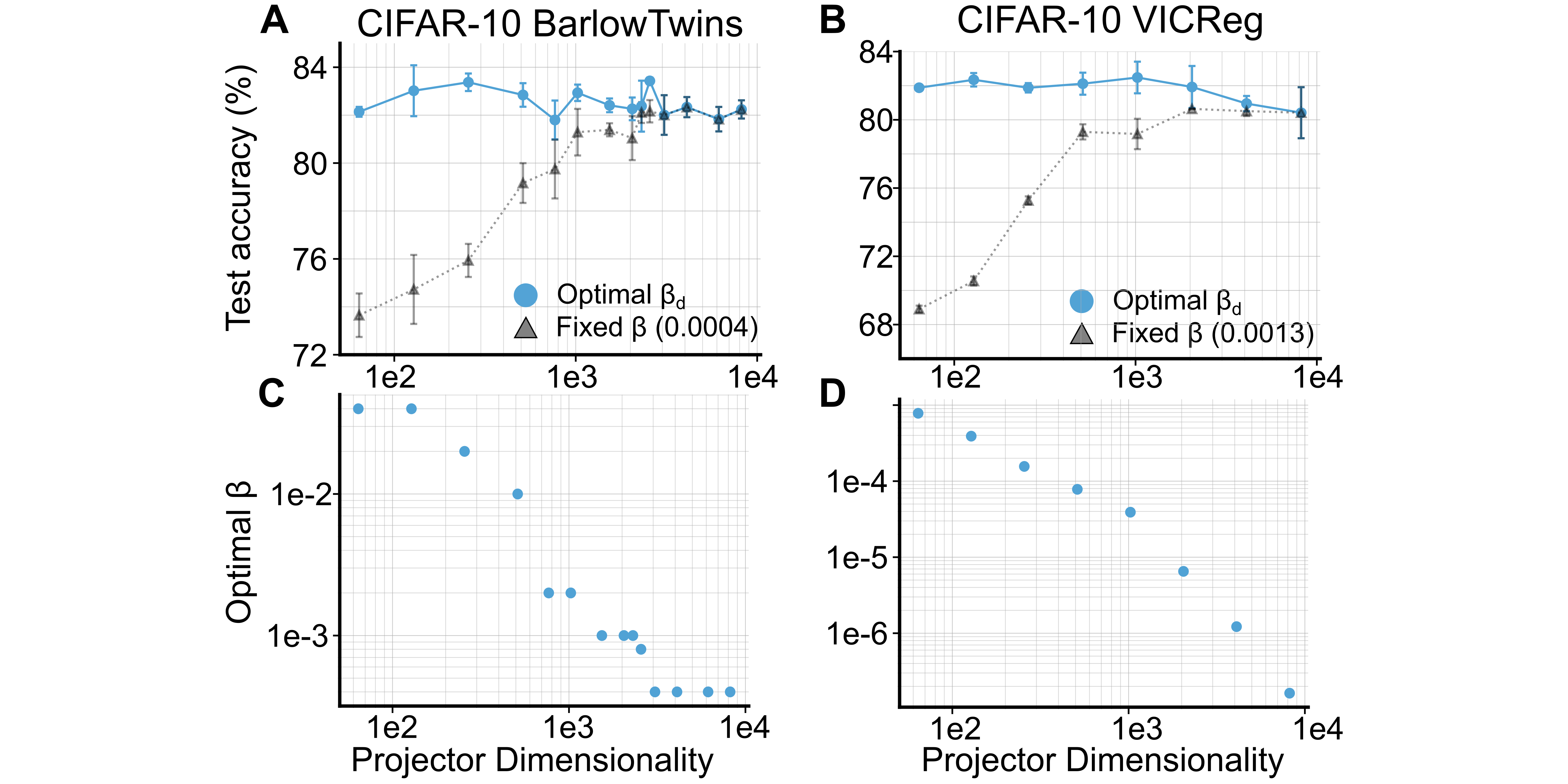}
    \vspace{-0.5cm}
    \caption{Low-dimensional projectors can yield good representations for both BarlowTwins and VICReg. We demonstrate that using a higher orthogonality constraint, $\beta$, for lower projector dimensionality can achieve similar performance over a wide range of projector dimensions ($d$). Note that for VICReg, we plot the ratio of the coefficient of the covariance loss to the coefficient of the invariance loss, i.e. $\beta = \frac{1}{d * \mu}$, where $\mu$ is the coefficient of the invariance loss. (See \Cref{eq:VICReg_implementation} for details of the loss formulation.)}
    \label{fig:fig3E_lambda_pdim}
\end{figure}

\begin{table}[!ht]
 \small
  \centering
  \begin{tabular}{c|c|c|c|c|c}
    \hline
    \multicolumn{1}{c|}{\multirow{2}{*}{{\it pdim}}} & \multicolumn{1}{c|}{\multirow{2}{*}{{\it Projector params (approx)}}} & \multicolumn{2}{c|}{\bf Barlow Twins} & \multicolumn{2}{c}{\bf VICReg} \\
    \multicolumn{1}{c|}{} & \multicolumn{1}{c|}{} & fixed $\beta$ & optimal $\beta^*$  & fixed $\beta$ & optimal $\beta^*$ \\
    \hline
    64 & 135k & 73.6 $\pm$ 0.9 & 82.1 $\pm$ 0.2 & 68.9 $\pm$ 0.2 & 81.9 $\pm$ 0.1 \\
    128 & 278k & 74.7 $\pm$ 1.4 & 83.0 $\pm$ 1.1 & 70.6 $\pm$ 0.3 & 82.3 $\pm$ 0.4 \\
    256 & 589k & 75.9 $\pm$ 0.7 & 83.4 $\pm$ 0.4 & 75.3 $\pm$ 0.2 & 81.9 $\pm$ 0.3 \\
    512 & 1.3M & 79.2 $\pm$ 0.8 & 82.8 $\pm$ 0.5 & 79.3 $\pm$ 0.4 & 82.1 $\pm$ 0.6 \\
    1024 & 3.1M & 81.3 $\pm$ 1.0 & 82.9 $\pm$ 0.3 & 79.2 $\pm$ 0.9 & 82.5 $\pm$ 0.9 \\
    2048 & 8.3M & 81.0 $\pm$ 0.9 & 82.3 $\pm$ 0.5 & 80.6 $\pm$ 0.0 & 81.9 $\pm$ 1.2 \\
    4096 & 25.2M & 82.3 $\pm$ 0.4 & 82.3 $\pm$ 0.4 & 80.5 $\pm$ 0.3 & 81.0 $\pm$ 0.4 \\
    8192 & 83.9M & 82.2 $\pm$ 0.4 & 82.2 $\pm$ 0.4 & 80.4 $\pm$ 1.5 & 80.4 $\pm$ 1.5 \\
    \hline
  \end{tabular}
  \vspace{2mm}
  \caption{Extended version of \Cref{tab:multicol_example}. Optimizing for orthogonality appropriately allows low-dimensional projectors to match the performance for BarlowTwins and VICReg (on CIFAR-10) of much higher-dimensional projectors.}
  \label{tab:cifar10_pdim_res_extended}
\end{table}
\clearpage
\subsection{Empirical results with multi-augmentations along with Time}

\begin{figure}[htbp]
    \centering
    \includegraphics[width=\linewidth]{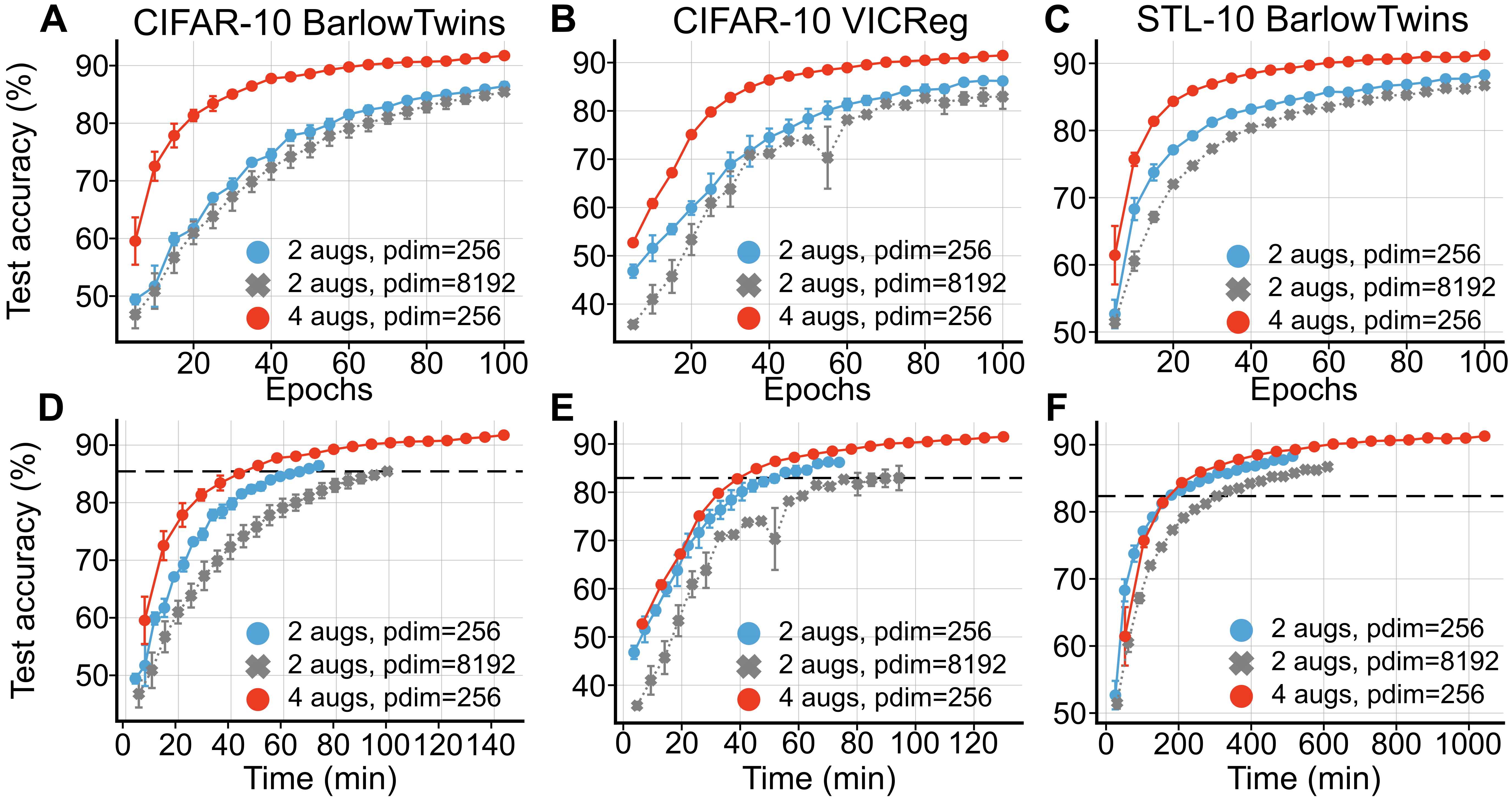}
    \vspace{-0.25cm}
    \caption{Using multiple augmentations improves representation learning performance and convergence. (A-C) Across BarlowTwins and VICReg for CIFAR-10 and STL-10 pretraining, using 4 augmentations instead of 2 helps improve performance. (D-F) Although the 4-augmentations take longer for each epoch, its performance still trumps the 2-augmentation version of the algorithm at the same wall clock time. Please see \Cref{sec:full_dset_pretraining} for more results.}
    \label{fig:multipatch_old}
\end{figure}

\begin{figure}[!ht]
    \centering
    \includegraphics[width=\linewidth]{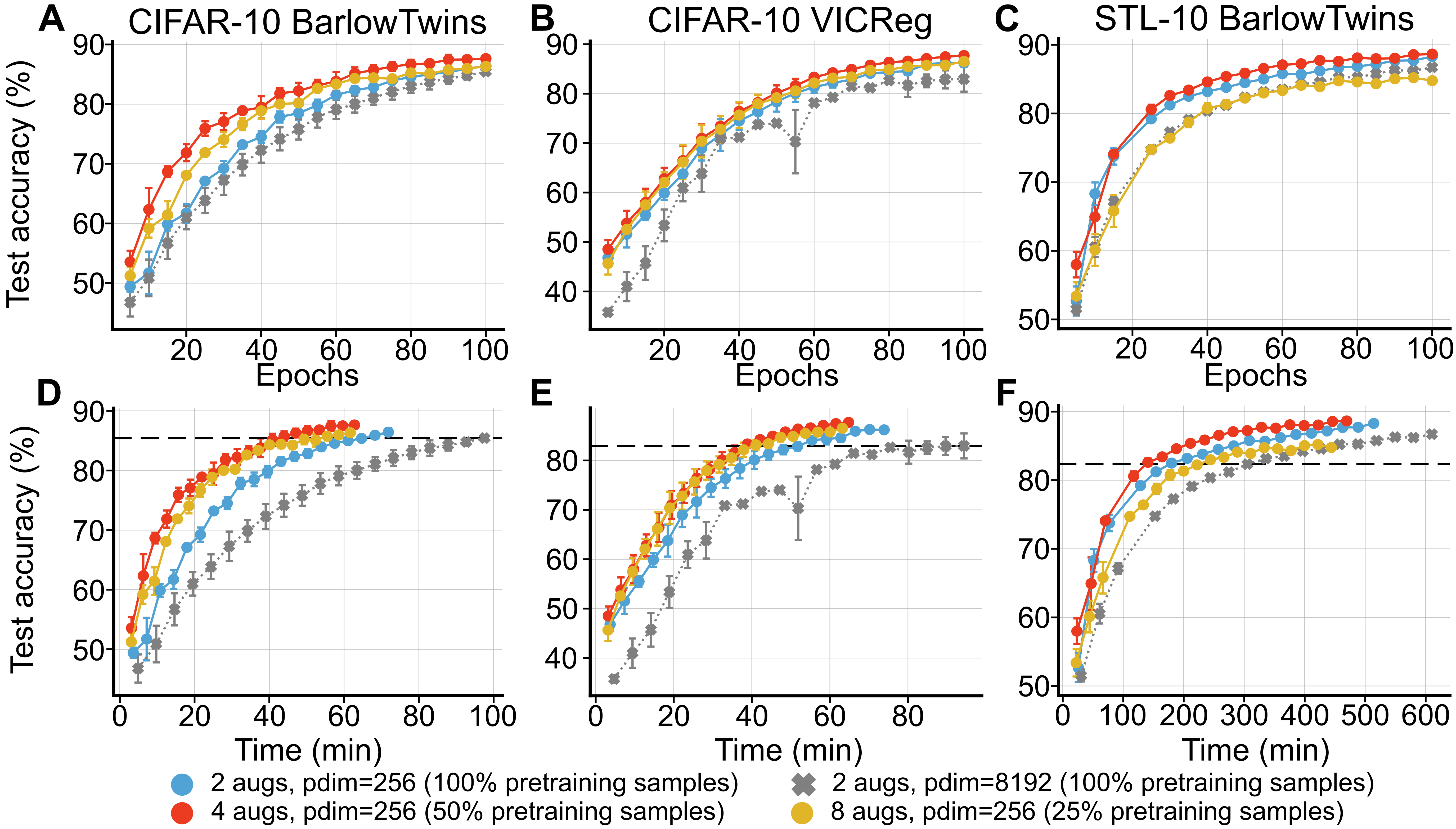}
    \caption{Multi-augmentation improves sample efficiency, recovering similar performance with significantly fewer unique samples in the pretraining dataset. Across BarlowTwins and VICReg pretraining on CIFAR-10 and STL-10, for the same effective dataset size ($\#augs \times \#unique\_samples$), using more patches improves performance at the same epoch (A-C) or wall clock time (D-F). However, a tradeoff exists wherein more data augmentations fail to improve performance in the scarce data regime.}
    \label{fig:multiPatch_fracTrain_old}
\end{figure}

\begin{figure}[!htbp]
    \centering
    \includegraphics[width=0.8\linewidth]{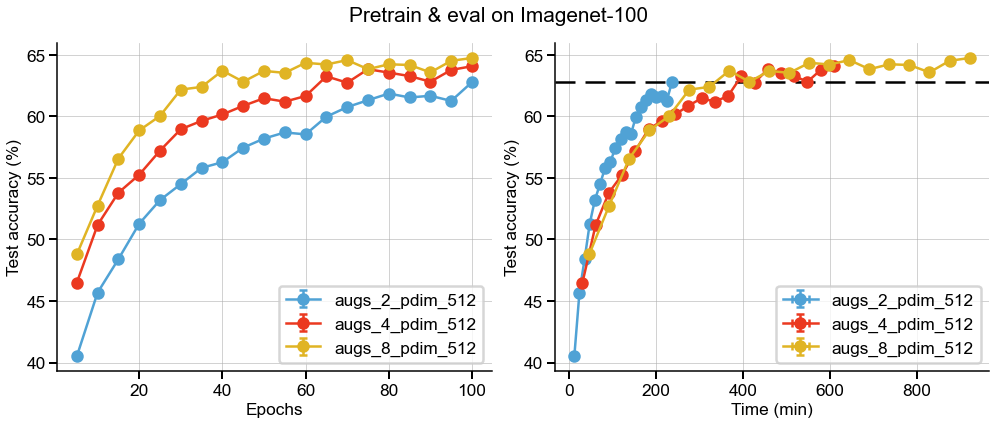}
    \caption{BarlowTwins pretraining on full Imagenet-100 dataset with 2, 4 and 8 augmentations.}
    \label{fig:rebutt_tALx2}
\end{figure}

\begin{figure}[!htbp]
    \centering
    \includegraphics[width=0.8\linewidth]{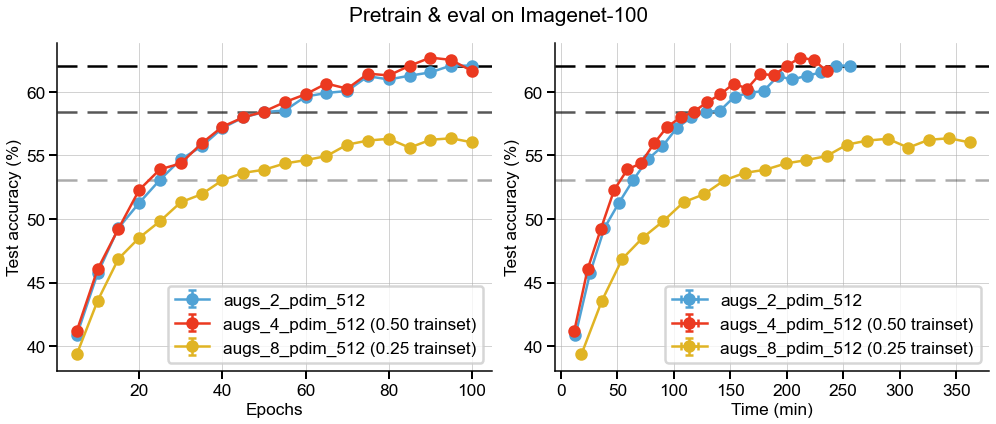}
    \caption{BarlowTwins pretraining on fraction of Imagenet-100 dataset with 2, 4 and 8 augmentations.}
    \label{fig:rebutt_tALx3}
\end{figure}
\clearpage

\subsection{Empirical results on transfer learning}
In this section, we present extended version of results presented in \Cref{fig:multipatch}, \Cref{fig:multiPatch_fracTrain} but pretraining on CIFAR-10 (or STL-10) and evaluating on STL-10 (or CIFAR-10). These results, coupled with the ones in \Cref{fig:multipatch} \Cref{fig:multiPatch_fracTrain}, present a strong case for the advantage of using the proposed multi-augmentation loss for better convergence as well as downstream accuracy.

\begin{figure}[!htbp]
    \centering
    \includegraphics[width=0.8\linewidth]{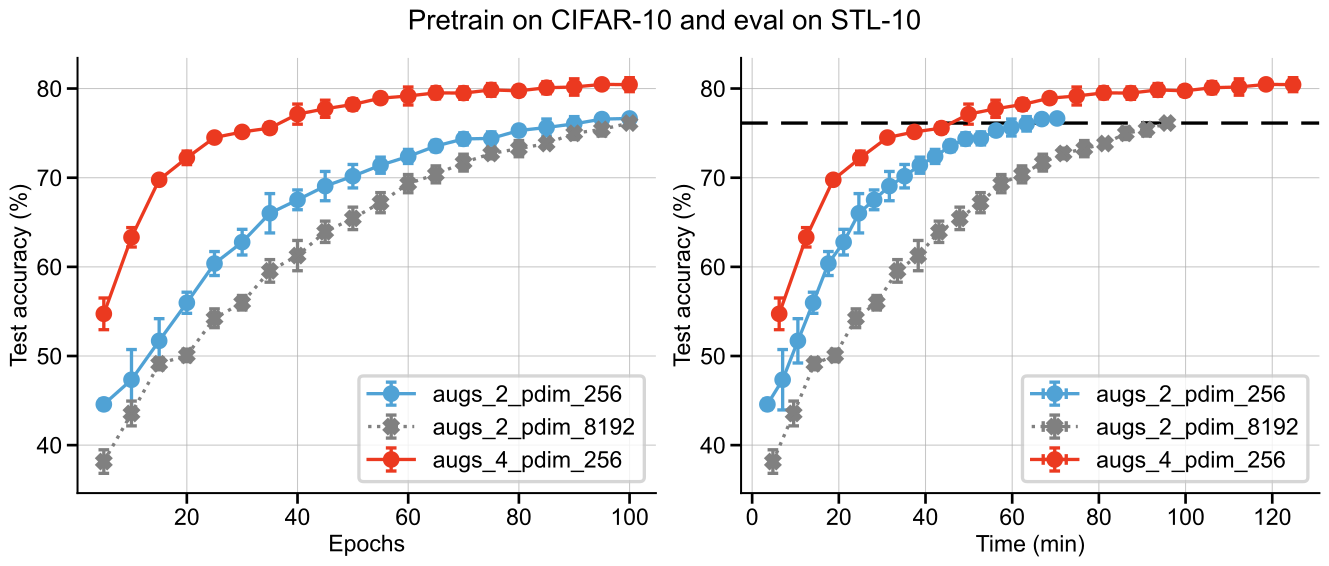}
    \caption{BarlowTwins pretraining on CIFAR-10, linear evaluation on STL-10 labelled set.}
    \label{fig:fig4A_CIFAR_Barlow}
\end{figure}

\begin{figure}[!htbp]
    \centering
    \includegraphics[width=0.8\linewidth]{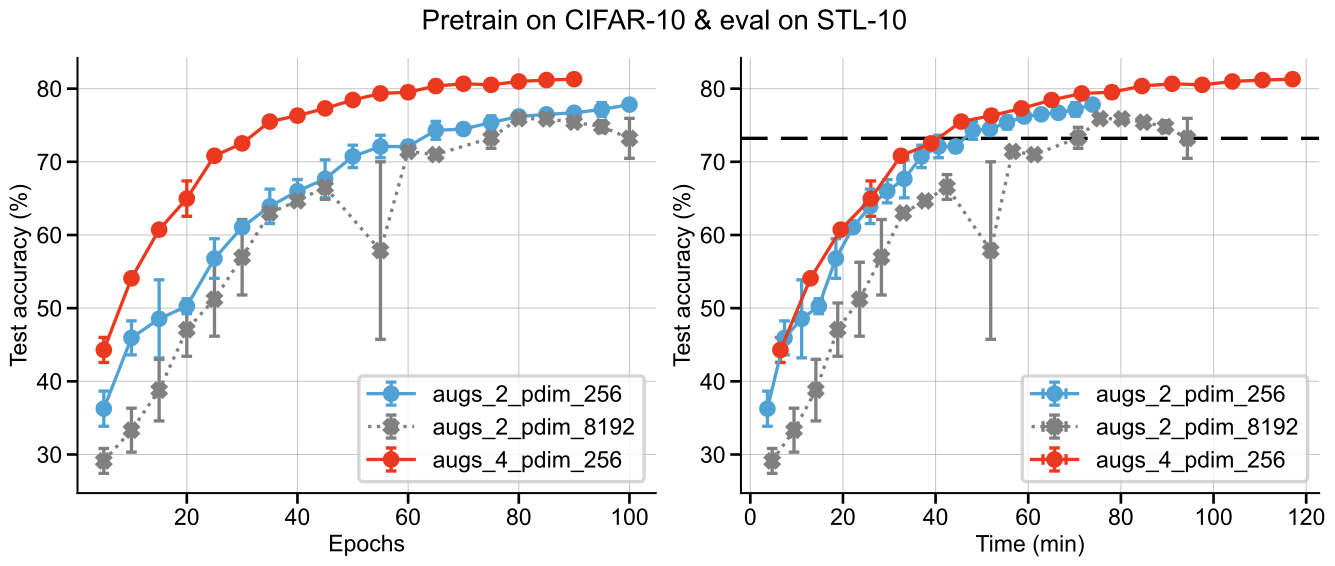}
    \caption{VICReg pretraining on CIFAR-10, linear evaluation on STL-10 labelled set.}
    \label{fig:fig4A_CIFAR_VICReg}
\end{figure}

\begin{figure}[!htbp]
    \centering
    \includegraphics[width=0.8\linewidth]{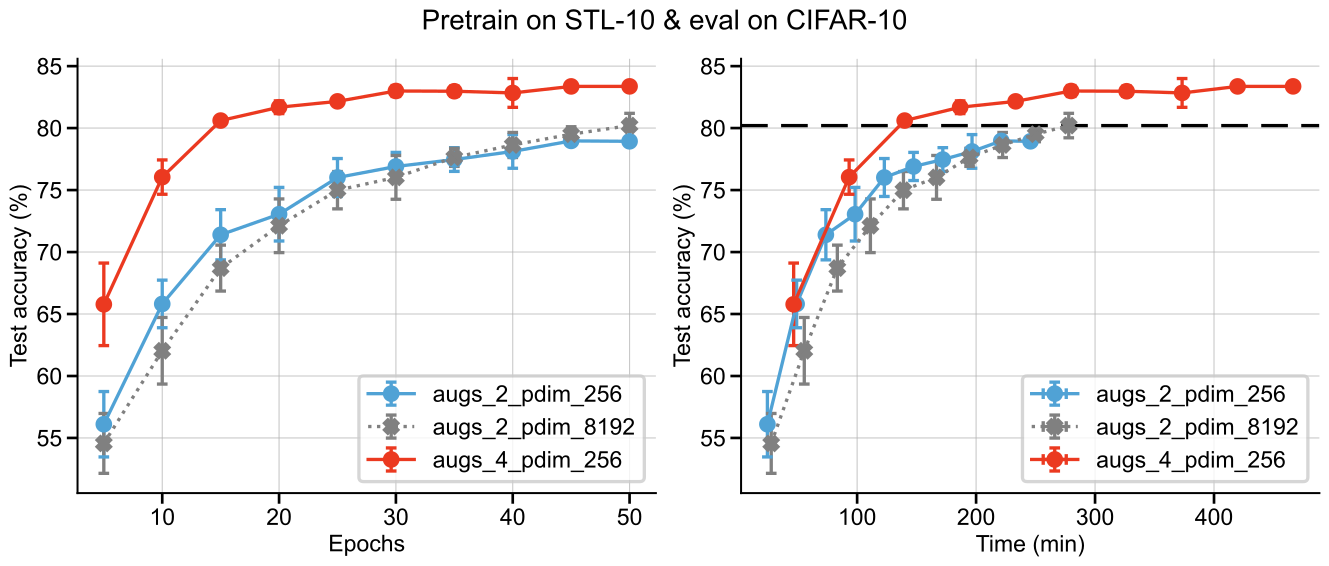}
    \caption{BarlowTwins pretraining on STL-10, linear evaluation on CIFAR-10 labelled set.}
    \label{fig:fig4A_STL_Barlow}
\end{figure}

\begin{figure}[!htbp]
    \centering
    \includegraphics[width=0.8\linewidth]{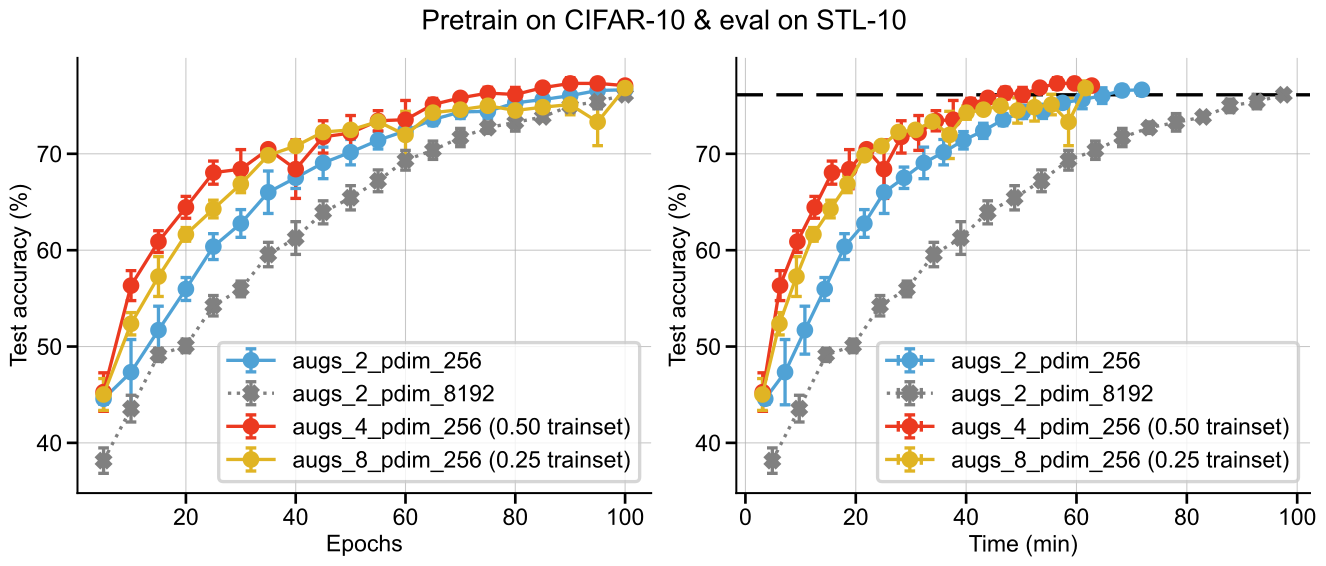}
    \caption{BarlowTwins pretraining on fraction of CIFAR-10 trainset, linear evaluation on STL-10 labelled set.}
    \label{fig:fig5A_CIFAR_Barlow}
\end{figure}

\begin{figure}[!htbp]
    \centering
    \includegraphics[width=0.8\linewidth]{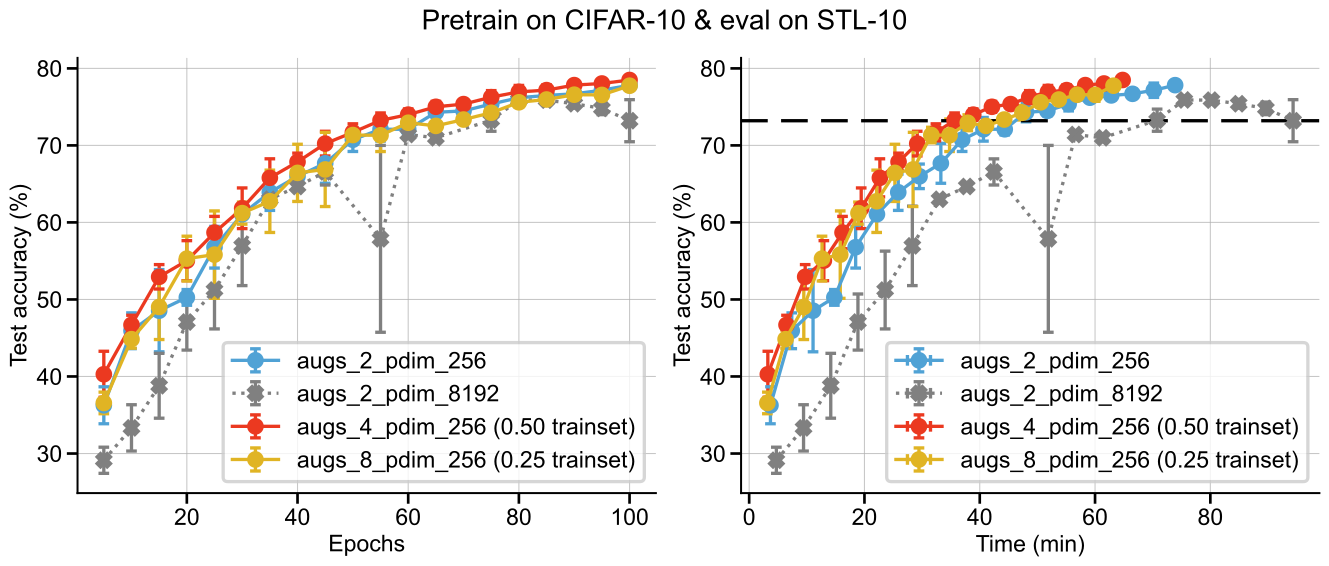}
    \caption{VICReg loss pretraining on fraction of CIFAR-10 trainset, linear evaluation on STL-10 labelled set.}
    \label{fig:fig5A_CIFAR_VICReg}
\end{figure}

\begin{figure}[!htbp]
    \centering
    \includegraphics[width=0.8\linewidth]{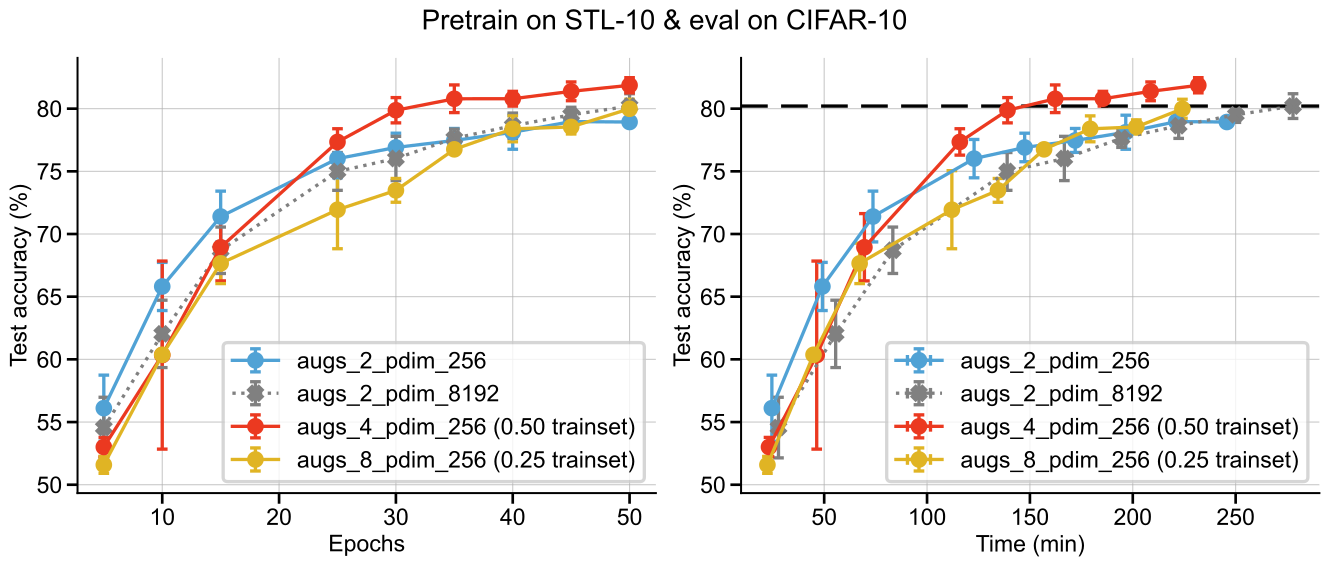}
    \caption{BarlowTwins loss pretraining on fraction of STL-10 unlabelled set, linear evaluation on CIFAR-10 train set.}
    \label{fig:fig5A_STL_Barlow}
\end{figure}
\clearpage

\section{Additional Experiments probing multi-augmentation learning}

\subsection{Longer Pretraining to determine early stopping}
\label{sec:longer_pretraining}

\begin{figure}[!htbp]
    \centering
    \includegraphics[width=0.8\linewidth]{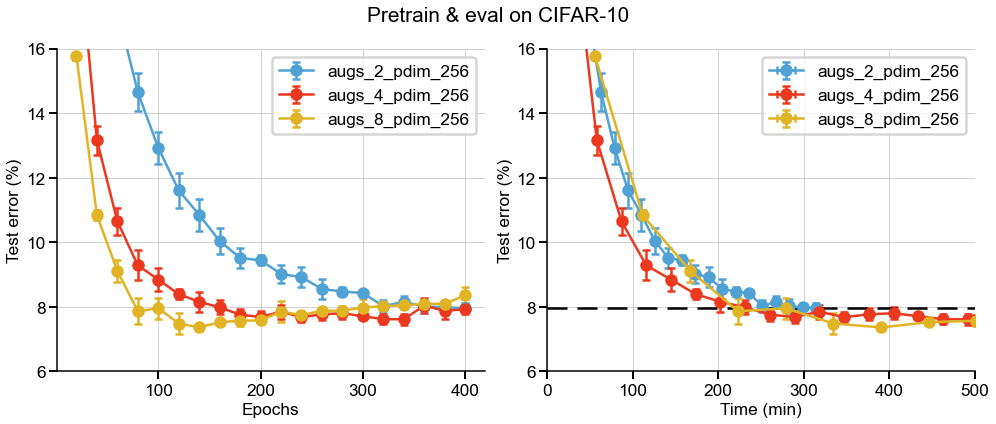}
    \caption{BarlowTwins pretraining on full CIFAR-10 dataset for 400 epochs.}
    \label{fig:rebutt_AjHt_2}
\end{figure}

\begin{table}[ht]
\centering
\begin{tabular}{|l|c|c|}
\hline
{\bf Algorithm }                            & {\bf Best accuracy}        & {\bf Best accuracy @ epoch }       \\ \hline
Barlow-Twins (2-augs) w/ pdim=256   & 92.04 +/- 0.16  & 400 \\ \hline
Barlow-Twins (4-augs) w/ pdim=256   & 92.39 +/- 0.17  & 340 \\ \hline
Barlow-Twins (8-augs) w/ pdim=256   & 92.64 +/- 0.10  & 140 \\ \hline
\end{tabular}
\caption{BarlowTwins pretraining on full CIFAR-10 dataset at 400 epochs (with early stopping)}
\label{fig:table_cifar10_ep400}
\end{table}

\subsection{SwAV-like augmentations for compute efficient multi-augmentation framework}
\label{sec:swav_pretraining}
\begin{figure}[ht]
    \centering
    \includegraphics[width=0.8\linewidth]{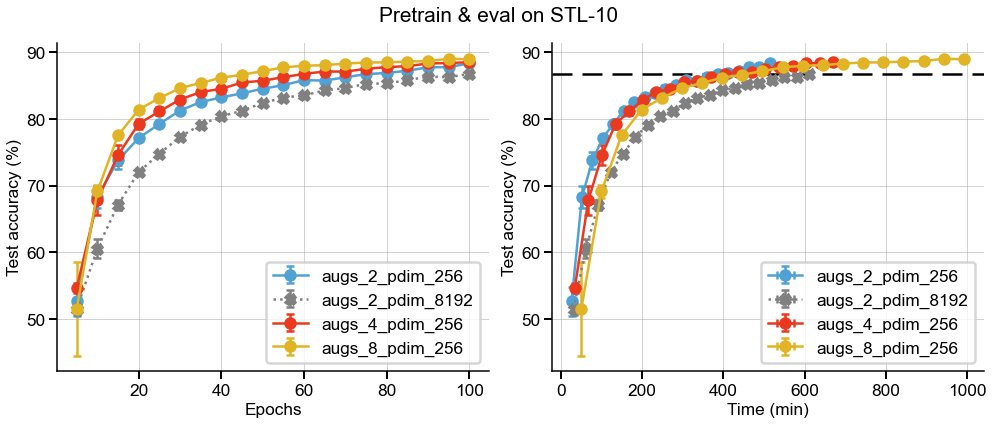}
    \caption{BarlowTwins pretraining on full STL-10 dataset for 100 epochs using SwAV-like augmentations. Specifically, the 2-augmentations setting uses two views that are $64 \times 64$, whereas the 4 (or 8) augmentation setting uses additional two (or six) augmentations that are $32 \times 32$.}
    \label{fig:rebutt_AjHt_1}
\end{figure}

\newpage

\subsection{Training with full dataset with 4/8 augmentations}
\label{sec:full_dset_pretraining}

\begin{figure}[!htbp]
    \centering
    \includegraphics[width=0.8\linewidth]{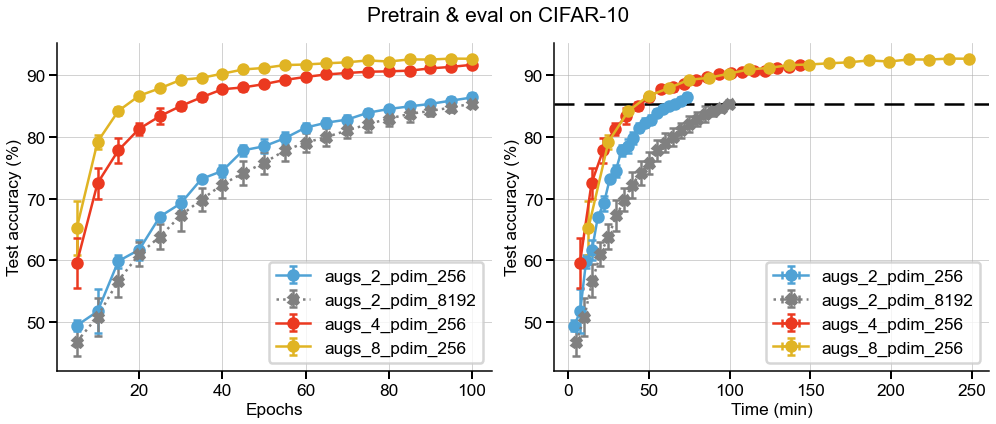}
    \caption{BarlowTwins pretraining on full CIFAR-10 dataset with 2, 4 and 8 augmentations.}
    \label{fig:rebutt_tALx1}
\end{figure}

\begin{table}[ht]
\centering
\begin{tabular}{|l|c|c|c|}
\hline
{\bf Algorithm }                            & {\bf \#augs=2}        & {\bf \#augs=4 }       & {\bf \#augs=8}        \\ \hline
Barlow-Twins w/ pdim=256   & 86.43 +/- 0.72  & 91.73 +/- 0.16  & 92.71 +/- 0.19  \\ \hline
Barlow-Twins w/ pdim=8192  & 85.44 +/- 0.54  & 91.40 +/- 0.32  & 92.40 +/- 0.13  \\ \hline
\end{tabular}
\caption{BarlowTwins pretraining on full CIFAR-10 dataset at 100 epochs}
\label{fig:table_cifar10}
\end{table}

\end{document}